\theoremstyle{plain}
\newtheorem{theorem}{Theorem}[section]
\newtheorem{proposition}[theorem]{Proposition}
\newtheorem{lemma}[theorem]{Lemma}
\newtheorem{corollary}[theorem]{Corollary}
\theoremstyle{definition}
\theoremstyle{remark}
\def\argmin{\mathop{\rm argmin}}
\newcommand*\circled[1]{\tikz[baseline=(char.base)]{
            \node[shape=circle,draw,inner sep=1pt] (char) {#1};}}
\definecolor{mycol}{RGB}{0, 0, 0}
\icmltitlerunning{Efficient Online Set-valued Classification with Bandit Feedback}
\begin{document}

\twocolumn[
\icmltitle{Efficient Online Set-valued Classification with Bandit Feedback}



\icmlsetsymbol{equal}{*}

\begin{icmlauthorlist}
\icmlauthor{Zhou Wang}{sch}
\icmlauthor{Xingye Qiao}{sch}
\end{icmlauthorlist}

\icmlaffiliation{sch}{Department of Mathematics and Statistics, Binghamton University, New York, USA}

\icmlcorrespondingauthor{Xingye Qiao}{xqiao@binghamton.edu}

\icmlkeywords{Machine Learning, ICML}

\vskip 0.3in
]



\printAffiliationsAndNotice{ } 

\begin{abstract}
Conformal prediction is a distribution-free method that wraps a given machine learning model and returns a set of plausible labels that contain the true label with a prescribed coverage rate. In practice, the empirical coverage achieved highly relies on fully observed label information from data both in the training phase for model fitting and the calibration phase for quantile estimation. This dependency poses a challenge in the context of online learning with bandit feedback, where a learner only has access to the correctness of actions (i.e., pulled an arm) but not the full information of the true label. In particular, when the pulled arm is incorrect, the learner only knows that the pulled one is not the true class label, but does not know which label is true. Additionally, bandit feedback further results in a smaller labeled dataset for calibration, limited to instances with correct actions, thereby affecting the accuracy of quantile estimation. To address these limitations, we propose Bandit Class-specific Conformal Prediction (BCCP), offering coverage guarantees on a class-specific granularity. Using an unbiased estimation of an estimand involving the true label, BCCP trains the model and makes set-valued inferences through stochastic gradient descent. Our approach overcomes the challenges of sparsely labeled data in each iteration and generalizes the reliability and applicability of conformal prediction to online decision-making environments.
 
\end{abstract}

\section{Introduction}\label{sec:intro}
Machine learning models, while highly effective, can fail in complicated scenarios due to inherent uncertainties and hence lead to irreversible consequences, particularly in high-stake applications. For instance, in autonomous vehicle systems, misidentifying real obstacles as harmless shadows on the road potentially causes abrupt braking or even dangerous maneuvers. In medical diagnostics, the challenge of differentiating between benign and malignant tumors in ambiguous cases can result in critical misdiagnoses, influencing treatment decisions. Such scenarios underscore the need for models capable of cautiously handling those observations with high uncertainty.

Quantifying the uncertainty associated with each observation can be addressed by reporting a prediction set, which can be realized by some set-valued classification paradigms such as Classification with the Reject Option \cite{herbei2006classification,bartlett2008classification,charoenphakdee2021classification,zhang2018reject} and Conformal Prediction \cite{vovk2005algorithmic,shafer2008tutorial,balasubramanian2014conformal}. Intuitively speaking, an observation with a large prediction set indicates its intrinsic difficulty and it is hard to be correctly classified. Unlike Classification with the Reject Option, the Conformal Prediction method particularly yields a set with valid prediction coverage, i.e., a prediction set includes the true label with a user-prescribed coverage rate $1-\alpha, \alpha\in[0, 1]$.

The literature on Conformal Prediction (and other set-valued classification methods) covers various aspects. For instance, \citet{lei2013distribution,lei2014classification,lei2015conformal,lei2018distribution,sadinle2019least,wang2018learning,wang2022set} consider the coverage guarantees conditional on each class instead of the standard marginal coverage \citep{vovk2005algorithmic}. \citet{romano2020classification,angelopoulos2021uncertainty} explore different (un)conformity scores to output informative conformal prediction sets.  \citet{tibshirani2019conformal} introduces weighted conformal prediction in the situation of covariate distribution shift, while \citet{hechtlinger2018cautious,guan2022prediction,JMLR:v24:23-0712} generalize set-valued predictions to the realm of out-of-distribution detection due to the semantic distribution shift by admitting an empty prediction set. However, these studies predominantly focus on the setting with access to full-label information and offline training, limiting their applicability in real-world scenarios.

Recent extensions of Conformal Prediction to online learning settings, (1) address arbitrary distribution shifts \citep{gibbs2021adaptive,gibbs2022conformal,zaffran2022adaptive,bhatnagar2023improved}, and (2) apply the principles on the off-policy evaluation problem \citep{taufiq2022conformal,zhang2023conformal,stanton2023bayesian} in reinforcement learning. Yet, these works require significantly more label information than what bandit feedback affords: in the distribution shift problem with full feedback, a learner knows the true label regardless of its decision's correctness; in the policy evaluation problem, the learner receives a reward that reflects the optimality of the pulled arm. In contrast, in the bandit feedback setting \citep{langford2007epoch,kakade2008efficient,wang2010potential} a learner only receives feedback about the correctness of predictions rather than the ground truth of label information. For instance, a learner in TikTok can correctly capture a positive attitude toward the video recommendation through a user's click, whereas the user's preferences remain uncertain if the presented recommendation is disliked by the user (it does not know what the user likes). Similarly, in personalized medicine, a medical system adjusts chemotherapy treatments based on partial feedback, such as tumor response, without full knowledge of how other treatments might have worked for that patient.

Motivated by the limited literature on Conformal Prediction within the context of online bandit feedback, we introduce the Bandit Class-specific Conformal Prediction (BCCP) framework for the multi-class classification problem. To the best of our knowledge, this is the first effort in applying conformal prediction to this particular context. Our key contributions are as follows: (1) BCCP leverages an unbiased estimator for accurate ground truth inference of label information, allowing the use of those data instances for which the wrong arm was pulled in both model fitting and quantile estimation; (2) Our method capitalizes on the efficiency of stochastic gradient descent for dynamically updating the quantile estimation, which differentiates itself from the traditional split conformal method in which sample quantiles based on a sufficiently large calibration dataset are used; (3) We theoretically prove that both the class-specific coverage and the excess risk with respect to the check loss converge at a rate of $\mathcal{O}(T^{-1/2})$ under certain conditions; (4) Recognizing the practical challenge of selecting an optimal learning rate for updating the quantile estimation, we use an ensemble approach to update the estimation with a range of learning rates; (5) The effectiveness of BCCP is empirically validated using three different score functions and two policies (for pulling arm) across three datasets, demonstrating the versatility and efficacy of our proposed framework.

The rest of the paper is organized as follows. In \cref{sec:review}, we begin with a review of the related work. This is followed by \cref{sec:method}, where we introduce our methodology complemented by a series of associated theorems. In \cref{sec:expir}, we present experiments to demonstrate the effectiveness of our method. The conclusions to our work are given in  \cref{sec:conclusion} and proofs are attached in \cref{app:implent}.

\section{Preliminary}\label{sec:review}
In this section, we review some key concepts of Conformal Prediction and the Multi-armed Bandit Problem.

\subsection{Conformal Prediction}\label{sec:CP}
Conformal prediction \citep{vovk2005algorithmic, lei2015conformal} is a distribution-free methodology that can complement various machine learning models, such as neural networks, support vector machines (SVMs), and random forests. It is utilized to produce set-valued predictions with a theoretically guaranteed coverage rate prescribed by users.

Consider a labeled training dataset $\mathcal{D}=\{(\bm X_{i}, Y_{i})\}_{i\in \mathcal{I}}$ ($\mathcal{I}$ denotes the index set) and a test instance $\bm X$ with unknown label $Y$, where both are assumed to be i.i.d. from an unknown distribution over the domain $\mathcal{X}\times \mathcal{Y}$. In the classification problem, the Standard Conformal Prediction employs a mapping (depending on the dataset $\mathcal{D}$) $\widehat{\mathcal{C}}:\mathcal{X}\mapsto 2^{\mathcal{Y}}$ and returns a prediction set $\widehat{\mathcal{C}}(\bm X)$ for the test point $\bm X$, ensuring the marginal coverage rate
\begin{equation}\label{eq:stdCP}
\mathbb{P}(Y\in \widehat{\mathcal{C}}(\bm X))\geq 1-\alpha,
\end{equation}
where $\alpha\in[0, 1]$ represents the pre-specified nominal non-coverage rate by practitioners. Notice that the probability is taken over the training dataset $\mathcal{D}$ and the test point $(\bm X, Y)$.

Considering that the marginal coverage guarantee in Standard Conformal Prediction may not be adequate for certain specific classes, \citet{lei2013distribution,lei2015conformal,lei2018distribution,sadinle2019least} explored Class-conditional Conformal Prediction, which offers class-specific coverage
\begin{equation}\label{eq:condCP}
\mathbb{P}(Y\in \widehat{\mathcal{C}}(\bm X)\mid Y=k)\geq 1-\alpha, ~\forall~k\in\mathcal{Y}.
\end{equation}
The same paradigm is also considered in \citet{wang2018learning,wang2022set,JMLR:v24:23-0712}. It is crucial to understand that while \eqref{eq:condCP} implies \eqref{eq:stdCP}, the converse is not necessarily true. On the other hand, compared to the marginal coverage, the class-specific coverage may yield larger prediction sets when practitioners have limited data for each class. Motivated by this limitation, \citet{ding2024class} proposed Clustered Conformal Prediction to navigate this trade-off between marginal and class-specific coverage in the low-data regime, while \citet{romano2020classification,angelopoulos2021uncertainty} proposed different score functions to improve the prediction set size especially when there are many classes.

In general, Conformal Prediction starts with a (conformity) score function $s:\mathcal{X}\times\mathcal{Y}\mapsto\mathbb{R}$. It is employed to gauge the proximity of an observation $\bm X$ to any class $k\in\mathcal{Y}$. Intuitively speaking, the larger the conformity score $s(\bm X, k)$, the higher the likelihood that the observation $\bm X$ belongs to the class $k$. This score function can manifest in various forms, such as the softmax probability in neural networks, the functional margin in SVMs, or the average predicted class probabilities of trees in random forests.

In the split conformal method \citep{papadopoulos2002inductive,lei2013distribution}, the index set $\mathcal{I}$ associated with the original dataset $\mathcal{D}$ is partitioned into two disjoint subsets: the training part $\mathcal{I}_{tr}$ and the calibration part $\mathcal{I}_{cal}$. The former is used to fit a model $\bm f$ in the training phase, such as training a neural network to minimize cross-entropy loss \citep{romano2020classification,angelopoulos2021uncertainty}, training an SVM to minimize hinge loss \citep{wang2018learning,wang2022set}, or growing a random forest based on Gini-impurity \citep{guan2022prediction}. This model $\bm f$ is then utilized to customize the aforementioned conformity score function $s$.  For example, $s$ could be directly taken as $\bm f$, or a monotonic function of $\bm f$, e.g., softmax score. With the conformity score established, the next step involves identifying score thresholds $\tau_{k},k\in\mathcal{Y}$ within the calibration part $\mathcal{I}_{cal}$, thereby enabling decision-making for the upcoming test points. In summary, a prediction set for a query $\bm X$ with the class-specific coverage guarantee \eqref{eq:condCP} is defined as
\begin{equation*}
\widehat{\mathcal{C}}(\bm X):=\{k\in\mathcal{Y}: s(\bm X, k)\geq \tau_{k}\},
\end{equation*}
where the threshold $\tau_{k}$ is determined as the $100\times\alpha\%$ sample quantile of the conformity scores for the calibration set, i.e., the $(\lfloor|\mathcal{I}_{cal}|\alpha\rfloor+1)$-th smallest value in $\{s(\bm X_i, k)\}_{i\in\mathcal{I}_{cal}}$ \citep{romano2019conformalized}. Throughout this article,  $|\cdot|$ being applied on a set denotes the size or cardinality of the set.

\subsection{Multi-armed Bandit and Multi-class Classification}
The Multi-armed Bandit Problem \citep{lai1985asymptotically,auer2002finite} is a fundamental concept in reinforcement learning. It presents a scenario where a learner aims to optimize rewards or minimize regrets (cumulatively assessed from feedback) by pulling an ``arm" (or taking an action), $A$, from a set of available arms denoted as $\{1, \cdots, K\}$, where $K$ represents the total number of arms. The selection of an arm is guided by a policy $\pi$, tailored to maximize expected gains over time. The policy $\pi$ could be a probability distribution to generate an arm to pull, or deterministic.

When extended to multi-class classification with bandit feedback, this concept incorporates contextual information or features, $\bm X$, effectively transforming it into a contextual bandit problem. Particularly in online learning settings, at time point $t$, the learner selects an arm $A_t\sim\pi$ for a given query context $\bm X_t$, and subsequently receives binary feedback $\mathbbm{1}\{A_t=Y_t\}$. This feedback, indicating whether the pulled arm (class) matches the true label $Y_t$, introduces uncertainty regarding the true label, complicating the learner's updating process. For example, different from the full feedback setting \citep{gibbs2021adaptive,gibbs2022conformal,bhatnagar2023improved}, the learner here has no idea upon the true label for the query $\bm X_t$ if the value of feedback is 0.

Several studies have explored the domain of contextual bandits, where the hypothesis space comprises linear predictors \citep{kakade2008efficient,wang2010potential,crammer2013multiclass,abbasi2011improved,gollapudi2021contextual,van2021beyond}. These works focus on the efficacy of linear models in capturing the relationship between context and action rewards. However, the linear representation has its limitations in capturing complex relationships.

In response to these limitations, recent studies have delved into neural contextual bandits \citep{zhou2020neural,jin2021mots,zhang2021neural,xu2022neural}. These approaches leverage the expressive power of deep neural networks to model the context-action relationship more effectively. There are various policies proposed, including Thompson sampling and Upper Confidence Bound algorithms, to navigate the bandit problem in more complex and non-linear environments.

Despite these advancements in reinforcement learning, the existing literature primarily focuses on point prediction and lacks mechanisms for set-valued prediction and coverage control. This gap is particularly concerning in critical domains, as discussed in \cref{sec:intro}. The issue is partially addressed by recent works \citep{taufiq2022conformal,zhang2023conformal,stanton2023bayesian}, which apply Conformal Prediction to off-policy evaluation problems, thereby returning prediction sets. However, these researches diverge from our work, which specifically addresses the bandit problem setting. Our focus lies in integrating set-valued predictions with the bandit feedback framework, an area that has not been extensively explored, presenting both novel challenges and opportunities for advancing the field.

\begin{color}{mycol}
\subsection{Set-valued Classification with Bandit Feedback}
The proposed BCCP method (summarized in \cref{alg}) aims to make set-valued decisions with a coverage guarantee for instances from the same distribution as the training data in the bandit feedback setting. Particularly, given a query $\bm X_t$, the learner pulls an arm $A_t$ and receives the feedback $\mathbbm{1}\{A_t=Y_t\}$. With this feedback, the learner updates the model and thresholds in conformal prediction (lines 4--5 in \cref{alg}). During the test phase, the learner returns the prediction set based on the trained model and thresholds (line 3 in \cref{alg}). 

Take healthcare as an example. Due to cost and safety concerns, insurance companies may only allow the healthcare provider to prescribe one diagnostic test (e.g., X-ray, followed by CT, followed by cancer biomarker blood test, etc.) at a time to the patient (this may be viewed as pulling a single arm). When a diagnostic test turns out negative for a suspect cause, it is still unknown what the cause really is (this is consistent with our setting in which the learner only receives a bandit feedback that confirms the correctness of the pulled arm but does not necessarily reveal the true label). After a series of training over a large number of patients has been conducted, we have a diagnostic system that can make predictions for a new patient based on the patient's profile. Unless for clear-cut cases, often it is much safer for the provider to consider a set of most plausible causes and design the treatment plan that considers all plausible diseases, as opposed to treating the patient based on one single predicted disease.
\end{color}

\section{Towards the Bandit Conformal}\label{sec:method}
In this section, we introduce our method: Bandit Class-specific Conformal Prediction, specifically designed for set-valued multi-class classification problems in an online bandit feedback setting: let $\{(\bm X_t, Y_t)\}_{t=1}^T$ be a sequence of i.i.d. \textcolor{mycol}{points} from the domain $\mathcal{X}\times\mathcal{Y}$, where \textcolor{mycol}{a leaner cannot observe the label $Y_t$ and receives the non-zero feedback} only when an arm is correctly pulled. We aim to report a prediction set $\widehat{\mathcal{C}}^{t-1}(\bm X_t)$ (the learner only uses the information up to time $t-1$) with a class-specific coverage guarantee.  

Our methodology entails three pivotal steps: (1) estimating a ground truth based on a policy and feedback, (2) training the model with this estimation, and (3) estimating the $100\times\alpha\%$ quantile $\tau_{k}$ for each class $k \in \mathcal{Y}$.

\subsection{Estimating $\mathbbm{1}\{Y_t=k\}$}
In the bandit feedback context, for each query instance $\bm X_t$, the learner pulls an arm $A_t \in \mathcal{Y}$ based on a given policy $\pi_t:= \pi_t(\cdot \mid \bm X_t)$, effectively making an educated guess about the potential true label. The environment then provides binary feedback indicating the correctness of the chosen arm, i.e., $\mathbbm{1}\{A_t=Y_t\}$. As a direct observation of  $Y_t$ is not available, we rely on the following estimation to $\mathbbm{1}\{Y_t=k\}$, i.e., 
\begin{equation*}
\Delta_{t,k}:=\frac{\mathbbm{1}\{A_t=k\}}{\pi_t(k\mid \bm X_t)}\mathbbm{1}\{A_t=Y_t\}.
\end{equation*}
\begin{proposition}\label{thm:indicator}
$\Delta_{t,k}$ serves as an unbiased estimator of $\mathbbm{1}\{Y_t=k\}$. This is substantiated by the equation $$\begin{aligned}
\mathbb{E}_{\pi_t}\bigl[\Delta_{t,k}\bigr]&=\mathbb{E}_{\pi_t}\bigl[\Delta_{t,k}\mid A_t=k\bigr]\cdot \pi_t(k\mid \bm X_t)\\
&~~~~+\mathbb{E}_{\pi_t}\bigl[\Delta_{t,k}\mid A_t\neq k\bigr]\cdot[1-\pi_t(k\mid \bm X_t)]\\
&=\frac{\mathbbm{1}\{k=Y_t\}}{\pi_t(k\mid \bm X_t)}\cdot\pi_t(k\mid \bm X_t)+0=\mathbbm{1}\{Y_t=k\},
\end{aligned}$$ where the expectation is taken with respect to policy $\pi_t$, conditioning on all previous information and the point $(\bm X_t, Y_t)$. 
\end{proposition}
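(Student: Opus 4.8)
The plan is to exploit the inverse-propensity-weighting structure of $\Delta_{t,k}$ and to evaluate its conditional expectation by treating the pulled arm $A_t$ as the sole source of randomness. Once we condition on all information up to time $t-1$ together with the current point $(\bm X_t, Y_t)$, both $Y_t$ and the propensity $\pi_t(k\mid \bm X_t)$ become fixed (non-random) quantities, and the only random object remaining is $A_t \sim \pi_t(\cdot\mid \bm X_t)$. Thus the computation reduces to averaging over the draw of $A_t$ under the policy.

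First I would observe that the factor $\mathbbm{1}\{A_t=k\}$ forces $\Delta_{t,k}$ to vanish whenever $A_t\neq k$, so the estimator is supported entirely on the event $\{A_t=k\}$. On that event the term $\mathbbm{1}\{A_t=Y_t\}$ simplifies to $\mathbbm{1}\{k=Y_t\}$, which—after conditioning on $(\bm X_t, Y_t)$—is a deterministic constant that factors out of the expectation.

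Next I would apply the law of total expectation, decomposing over the partition $\{A_t=k\}$ and $\{A_t\neq k\}$ exactly as displayed in the statement. The second piece contributes zero by the previous observation. For the first piece, the conditional expectation $\mathbb{E}_{\pi_t}\bigl[\Delta_{t,k}\mid A_t=k\bigr]$ equals $\mathbbm{1}\{k=Y_t\}/\pi_t(k\mid \bm X_t)$, and this event carries probability exactly $\pi_t(k\mid \bm X_t)$ under $\pi_t$. The propensity in the denominator then cancels against the probability weight, leaving $\mathbbm{1}\{Y_t=k\}$.

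The calculation itself is elementary; the only point requiring genuine care is the positivity (overlap) condition $\pi_t(k\mid \bm X_t)>0$, which is implicitly needed for $\Delta_{t,k}$ to be well-defined and for the cancellation to be legitimate. I would flag that whenever the policy assigns zero probability to class $k$, the arm $k$ is never pulled, so $\mathbbm{1}\{A_t=k\}\equiv 0$ and the estimator is identically zero regardless of the truth; hence the unbiasedness genuinely relies on $\pi_t$ placing strictly positive mass on every class of interest in $\mathcal{Y}$. Beyond this caveat, no additional machinery is required.
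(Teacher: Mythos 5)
Your proof is correct and follows exactly the same route as the paper: conditioning on all past information and $(\bm X_t, Y_t)$, decomposing the expectation over the events $\{A_t=k\}$ and $\{A_t\neq k\}$, and letting the propensity cancel against the probability of pulling arm $k$. Your added observation that unbiasedness requires $\pi_t(k\mid \bm X_t)>0$ is a worthwhile caveat (the paper imposes this explicitly later, as $\pi_t(k\mid\bm X_t)\geq c_k>0$ in Theorem 3.2), but it does not change the argument.
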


This estimation framework lays the groundwork for subsequent tasks in our study. It allows us to effectively utilize the policy's capability to learn the real data-generating process without explicit knowledge about the true label $Y_t$.  

Policy design can be a flexible process, influenced by specific preferences such as the pursuit of simplicity or the goal of minimizing estimation variance.  In our research, we theoretically analyze the performance of certain policies characterized by the associated properties, as detailed in \cref{thm:finercvg,thm:finechk_reg}. Additionally, we conduct empirical evaluations and compare the performances of two distinct policies: the softmax policy (softmax probability output from a neural network as defined in \eqref{eq:softmax}) and the uniform policy (uniform distribution). See \cref{sec:expir}.

\begin{figure*}[!th]
    \centering
     \includegraphics[width=1\linewidth]{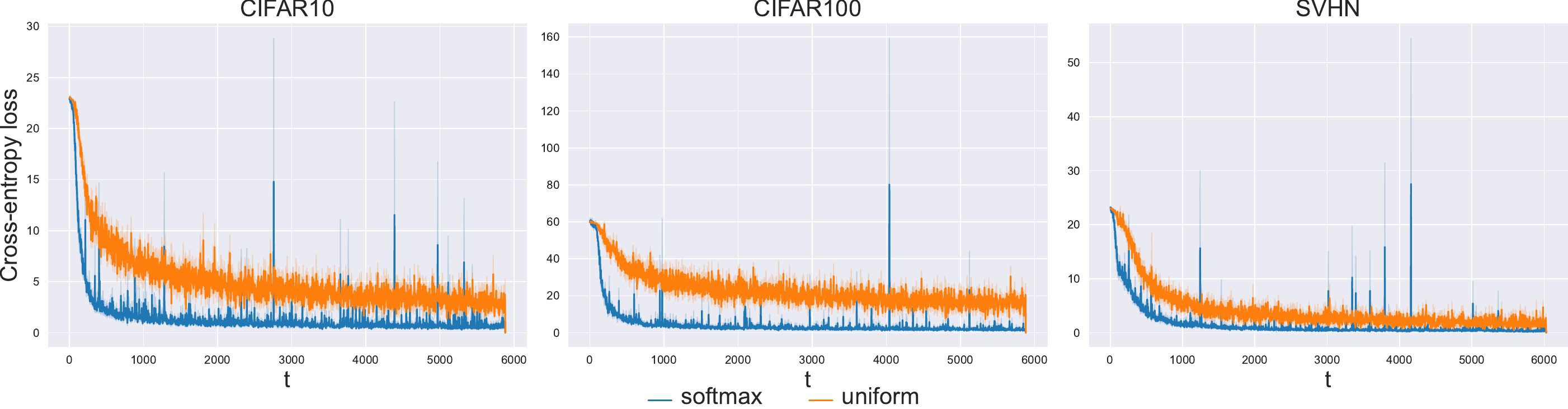}\vspace{-10pt}
    \caption{Accumulative cross-entropy loss under softmax policy and uniform policy.}\label{fig:CE}
\end{figure*}

\subsection{The Cross-entropy Loss with Bandit Feedback}\label{sec:banditCE}

Throughout this article, we train a neural network model $\bm f_{\mathcal{W}}(\bm X) = (f^1_{\mathcal{W}}(\bm X), \cdots, f^{|\mathcal{Y}|}_{\mathcal{W}}(\bm X))^\top \in \mathbb{R}^{|\mathcal{Y}|}$, which is parameterized by a set of matrices collectively represented by $\mathcal{W}$. Our primary objective in the training phase, particularly within the bandit feedback context, is to minimize a modified version of the cross-entropy loss for each input query $\bm X_t$, formulated as follows: 
\begin{equation}\label{sec:crossentropy}
    \mathcal{L}(\bm X_t; \mathcal{W})=-\sum\limits_{k\in\mathcal{Y}}\Delta_{t,k}\cdot \log\left(\hat p(k\mid \bm X_t)\right).
\end{equation}
By substituting $\Delta_{t,k}$ for the ground-truth label indicator $\mathbbm{1}\{Y_t=k\}$, the loss function becomes an unbiased estimator of the traditional cross-entropy loss with full feedback $-\log\left(\hat p(Y_t\mid \bm X_t)\right)$ by following a similar derivation in \cref{thm:indicator}. This allows using information in those instances where the true label $Y_t$ is not explicitly available.

The estimated probability mass function $\hat p(k\mid \bm X_t)$ for each class $k$ is derived from the outputs of the neural network. Specifically, it is modeled by applying the softmax function to the logits $f^k_\mathcal{W}(\bm X_t)$ produced by the neural network:
\begin{equation}\label{eq:softmax}
    \hat p(k\mid \bm X_t):=\frac{\exp(f^k_\mathcal{W}(\bm X_t))}{\sum_{\tilde k\in\mathcal{Y}}\exp(f^{\tilde k}_\mathcal{W}(\bm X_t))}, ~k\in\mathcal{Y}.
\end{equation}
By integrating the estimator $\Delta_{t,k}$ with the softmax output, our model can update efficiently by optimizing the tailored loss function \eqref{sec:crossentropy} with stochastic gradient descent. It is important to note that one may employ other loss functions, such as the hinge loss in SVMs  \citep{kakade2008efficient}.

\cref{fig:CE} presents a clear visualization of the cross-entropy loss across three real datasets in the bandit feedback setting. It shows the model fitting performance with the softmax and uniform policies. The plots illustrate that during the model training phase, the softmax policy consistently achieves a more rapid reduction in loss compared to the uniform policy. This superior performance can be attributed to the context-aware nature of the softmax policy, which strategically pulls arms based on the specific context of each query. This approach not only leads to a higher frequency of accurate predictions but also ensures better utilization of data points, thereby enhancing the overall efficiency and effectiveness of the model training process.

\subsection{The Quantile of the Conformity Score}\label{sec:quantile}
To control the class-specific coverage, our approach leverages thresholds/quantiles associated with a given conformity score function $s(\bm X, k)$, such as softmax, APS \citep{romano2020classification}, or RAPS \citep{angelopoulos2021uncertainty} score (see the definitions in \cref{app:implent}). Particularly, the primary goal in this phase is to determine a $100\times\alpha\%$ quantile $\tau_{k}$ of the distribution of $s(\bm X, k)$. To this end, the traditional split conformal method \cite{papadopoulos2002inductive,lei2013distribution} involves partitioning available labeled data into training and calibration sets. However, in the online setting, since we only have access to a limited dataset at each iteration, split conformal may lead to two primary issues: (1) reduced data for model training, and (2) large prediction sets due to limited labeled calibration data \citep{ding2024class}. These two issues are further aggravated in the bandit feedback setting because only those data whose correct arms are pulled are considered labeled.

To overcome these challenges, we adaptively update a quantile estimate $\tau_{k}$ by utilizing the check loss function \citep{JMLR:v7:takeuchi06a,koenker1978regression,romano2019conformalized,gibbs2021adaptive} for quantile estimation:
$$\rho_{\alpha}(s, \tau)=(s-\tau)\cdot\bigl(\alpha-\mathbbm{1}\{s<\tau\}\bigr).$$ 
More concretely, a class-specific $100\times\alpha\%$ quantile $\tau_{k}, k\in\mathcal{Y}$ is obtained by solving the below optimization problem:
\begin{align}
    &~~~~\argmin_\tau \mathbb{E}\bigl[\rho_\alpha(s(\bm X, k), \tau)\mid Y = k\bigr]\notag\\
    &=\argmin_\tau\frac{\mathbb{E}\big[\mathbbm{1}\{Y=k\}\cdot\rho_\alpha(s(\bm X, k), \tau)\big]}{\mathbb{E}\big[\mathbbm{1}\{Y=k\}\bigr]}\notag\\
    &=\argmin_\tau\mathbb{E}\big[\mathbbm{1}\{Y=k\}\cdot\rho_\alpha(s(\bm X, k), \tau)\big], \label{eq:popQuantLoss}
\end{align}
where the second equality holds due to the fact that $\mathbb{E}\big[\mathbbm{1}\{Y=k\}\bigr]=\mathbb{P}(Y=k)$ does not \textcolor{mycol}{rely} on the quantile estimation. \textcolor{mycol}{Given that the true joint density function $p(\bm x, y)$ is unknown}, we instead employ a data-driven approach for quantile estimation: for each data point consider the loss
\begin{align}
\Delta_{t, k}\cdot\rho_\alpha(s(\bm X_t, k), \tau)\label{eq:empQuantLoss},
\end{align}
which is an empirical counterpart of the population loss \eqref{eq:popQuantLoss}. Consequently, $\tau_{k}$ can be dynamically updated through stochastic gradient descent by computing the gradient, $-\Delta_{t,k}\cdot\bigl(\alpha-\mathbbm{1}\{s(\bm X_t, k)< \tau\}\bigr)$, of  the weighted loss \eqref{eq:empQuantLoss}. The updated quantiles $\tau_{k}, k\in\mathcal{Y}$ are then applied as the thresholds for the upcoming data in the next iteration only. The complete process, including the model training and quantile estimation in an online learning context, is outlined in \cref{alg,fig:flowchart}.

\begin{algorithm}[!th]
  \caption{Bandit Conformal}\label{alg}
  \begin{algorithmic}[1]
  \REQUIRE Initialize weight matrices $\mathcal{W}^0$ and class-specific quantiles $\tau^0_{k} =0, k\in\mathcal{Y}$. Provide a score function $s^t(\cdot, \cdot)$\footnotemark, a policy $\pi_t$ and learning rates $\eta_1, \eta_2$.
  \FOR{$t=1, 2, 3, \cdots, T$}
    \STATE Learner receives a query $\bm X_t$
    \STATE Generates a prediction set for the query:  $$\widehat{\mathcal{C}}^{t-1}(\bm X_t):=\left\{k\in\mathcal{Y}: s^{t-1}(\bm X_t, k)\geq\tau^{t-1}_{k}\right\}$$
    \STATE Learner pulls an arm $A_t\sim\pi_t$, receives the feedback $\mathbbm{1}\{A_t=Y_t\}$, and computes $\Delta_{t,k}$
  
  \STATE Update the network weight matrices and quantiles: 
  \begin{equation*}
      \resizebox{\linewidth}{!}{\ensuremath{\begin{cases}
    \mathcal{W}^{t}=\mathcal{W}^{t-1}- \eta_1\nabla_\mathcal{W}\mathcal{L}(\bm X_t;\mathcal{W}^{t-1})\vspace{5pt}\\
    \tau^{t}_{k}= \tau^{t-1}_{k} + \eta_2\Delta_{t,k}\bigl(\alpha-\mathbbm{1}\{s^{t-1}(\bm X_t, k)< \tau^{t-1}_{k}\}\bigr)
  \end{cases}}}
  \end{equation*}
  \ENDFOR
\end{algorithmic}
\end{algorithm}
\footnotetext{We add the superscript $t$ on the score function to explicitly impress that it depends on the neural network updated up to $t$-th iteration. The same argument is applied to other notations.}

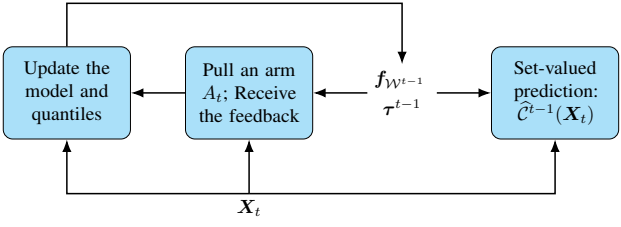
\begin{figure}[!ht]
    \centering
\resizebox{\linewidth}{!}{
\begin{adjustbox}{trim=0pt 3pt 0pt 3pt,clip}
\begin{tikzpicture}[
    block/.style={rectangle, draw, text width=6em, text centered, minimum height=17mm, node distance=1.cm, rounded corners=5pt
},
    line/.style={draw, -Latex, thick},
    line_no_arrow/.style={draw, thick},
]


    \node [block, fill=cyan!30] (updatemodel) {Update the model and quantiles};
    \node [block, fill=cyan!30, right=of updatemodel] (pullarm) {Pull an arm $A_t$; Receive the feedback};
    \node [align=center, right=of pullarm] (inputmodel) {$\bm f_{\mathcal{W}^{t-1}}$ \vspace{20pt} \\ [4pt] $\bm \tau^{t-1}$};
     \node [block, fill=cyan!30, right=of inputmodel] (setvalued) {Set-valued prediction: $\widehat{\mathcal{C}}^{t-1}(\bm X_t)$ };

    
    \node [above=of setvalued] (loct0) {};
    \node [above=of pullarm] (loct1) {};
    \node [above=of updatemodel] (loct2) {};
    \node [above=of inputmodel] (loctinput) {};
    
    \node [below=of setvalued] (locb0) {};
    \node [below=of pullarm] (input) {\(\bm X_t\)};
    \node [below=of updatemodel] (locb2) {};

    \path [line] (updatemodel) |- (loctinput.center) -- (inputmodel);
    
    \path [line] (input.north) -| (setvalued);
    \path [line] (input) -- (pullarm);
    \path [line] (input.north) -| (updatemodel);

    \path [line] (pullarm) -- (updatemodel);
    \path [line] (inputmodel) -- (setvalued);
    \path [line] (inputmodel) -- (pullarm);
\end{tikzpicture}
\end{adjustbox}}\vspace{-10pt}
    \caption{Flowchart of the online learning with bandit feedback. 
    Here $\bm \tau^{t-1}=(\tau^{t-1}_1, \cdots, \tau^{t-1}_{|\mathcal
    Y|})^\top$.}
    \label{fig:flowchart}
\end{figure}

When comparing with \citet{gibbs2021adaptive,gibbs2022conformal,zaffran2022adaptive,bhatnagar2023improved}, a critical aspect \textcolor{mycol}{differentiating} our method lies in the quantile updating process in addition to the model training in the bandit feedback context as elucidated in \cref{sec:banditCE}. In particular, the aforementioned studies predominantly work with the unweighted quantile estimation and require verification of whether the true label $Y_t$ falls in its prediction set $\widehat{\mathcal{C}}^{t-1}(\bm X_t)$ in their updating rules. This verification is typically achieved either by directly utilizing explicit label information or through multiple arm pulls until the true label is ascertained with absolute certainty. Such methodologies are not feasible in our setting for two primary reasons: (1) we lack direct access to the true label information, and (2) our framework does not permit multiple arm pulls for a single decision instance. In contrast, our approach (see the updating rule in \cref{alg}) involves computing the gradient of the weighted check loss \labelcref{eq:empQuantLoss} in the bandit feedback setting, which is an unbiased estimator of the gradient of unweighted check loss in the full feedback. This process is tailored to bandit feedback environments where each query allows only a single arm pull.

The below theorem implies the empirical coverage converges to the prescribed coverage.

\begin{theorem}\label{thm:cvg}
        Define the filtration $\mathcal{F}_t:=(\sigma(\bm X_t, Y_t)\times\sigma(\pi_t))\cup\mathcal{F}_{t-1} 
        $. Assume $\pi_t(k\mid \bm X_t)\geq c_k>0$ for all $t\in[T]$ and $\mathbb{E}[\frac{\mathbbm{1}\{Y_t=k\}}{\pi_t(k\mid \bm X_t)}\mid \mathcal{F}_{t-1}]=b^t_k$.      
        With probability at least $1-\delta$ \textcolor{mycol}{taken over all the randomness}, for all class $k\in\mathcal{Y}$, \cref{alg} yields the empirical coverage gap 
\begin{align}
    \text{CvgGap}_k&:=\biggl|\alpha-\frac{1}{T_k}\sum_{t=1}^T\mathbbm{1}\{Y_t=k\}\cdot\mathbbm{1}\{Y_t\not\in\widehat{\mathcal{C}}^{t-1}(\bm X_t)\}\biggr|\notag\\
    &\leq 
    \frac{\tau^{T}_{k}}{\eta_2 T_k}+\frac{\zeta_k(T,\delta/|\mathcal{Y}|)}{T_k}\notag,
\end{align}
where $\zeta_k(T,\delta)=\frac{2}{3c_k}\log\frac{2}{\delta}+\sqrt{2\log\frac{2}{\delta}\cdot \sum_{t=1}^Tb^t_k}$, and $T_k=\sum_{t=1}^T\mathbbm{1}\{Y_t=k\}$.
\end{theorem}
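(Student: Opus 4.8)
The plan is to convert the additive quantile update into a telescoping identity and then control the resulting stochastic error with a martingale concentration inequality. First I would note that $Y_t\notin\widehat{\mathcal C}^{t-1}(\bm X_t)$ is equivalent to $s^{t-1}(\bm X_t,Y_t)<\tau^{t-1}_{Y_t}$, so on the event $\{Y_t=k\}$ the miscoverage indicator equals the threshold indicator appearing in the update; abbreviate $e_{t,k}:=\mathbbm{1}\{s^{t-1}(\bm X_t,k)<\tau^{t-1}_k\}$. Summing the update rule from $t=1$ to $T$ with $\tau^0_k=0$ gives $\tau^T_k=\eta_2\sum_{t=1}^T\Delta_{t,k}(\alpha-e_{t,k})$. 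Rearranging and inserting the ground-truth indicator produces the exact decomposition
\begin{equation*}
\alpha T_k-\sum_{t=1}^T\mathbbm{1}\{Y_t=k\}\,e_{t,k}=\frac{\tau^T_k}{\eta_2}+\sum_{t=1}^T\bigl(\mathbbm{1}\{Y_t=k\}-\Delta_{t,k}\bigr)\bigl(\alpha-e_{t,k}\bigr).
\end{equation*}
Dividing by $T_k$ and applying the triangle inequality bounds $\text{CvgGap}_k$ by $\tau^T_k/(\eta_2T_k)$ plus $1/T_k$ times the absolute value of the last sum, so it remains only to control that sum.

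Next I would show that $\xi_{t,k}:=(\mathbbm{1}\{Y_t=k\}-\Delta_{t,k})(\alpha-e_{t,k})$ is a martingale difference sequence adapted to $\{\mathcal F_t\}$. The factor $\alpha-e_{t,k}$ is determined by $\tau^{t-1}_k$, the network up to time $t-1$, and the query $\bm X_t$, hence it is measurable given $\mathcal F_{t-1}$ together with $(\bm X_t,Y_t)$; pulling it out of the conditional expectation and invoking the unbiasedness of \cref{thm:indicator}, $\mathbb{E}_{\pi_t}[\Delta_{t,k}\mid\mathcal F_{t-1},\bm X_t,Y_t]=\mathbbm{1}\{Y_t=k\}$, yields $\mathbb{E}[\xi_{t,k}\mid\mathcal F_{t-1}]=0$ by the tower rule. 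I then read off the two ingredients Freedman's inequality needs. For the range, since $\Delta_{t,k}\in\{0\}\cup\{1/\pi_t(k\mid\bm X_t)\}$ with $\pi_t(k\mid\bm X_t)\ge c_k$ and $|\alpha-e_{t,k}|\le1$, one gets $|\xi_{t,k}|\le1/c_k$. For the predictable quadratic variation, the second-moment computation $\mathbb{E}[\Delta_{t,k}^2\mid\mathcal F_{t-1},\bm X_t,Y_t]=\mathbbm{1}\{Y_t=k\}/\pi_t(k\mid\bm X_t)$ gives $\mathbb{E}[\xi_{t,k}^2\mid\mathcal F_{t-1},\bm X_t,Y_t]\le\mathbbm{1}\{Y_t=k\}/\pi_t(k\mid\bm X_t)$, and a further conditional expectation yields $\mathbb{E}[\xi_{t,k}^2\mid\mathcal F_{t-1}]\le b^t_k$, so the quadratic variation is at most $\sum_{t=1}^T b^t_k$.

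Finally I would apply Freedman's (Bernstein-type) martingale inequality with range $R=1/c_k$ and variance proxy $V=\sum_{t=1}^T b^t_k$. Inverting the tail $\exp(-\tfrac{x^2/2}{V+Rx/3})$ and using $\sqrt{a+b}\le\sqrt a+\sqrt b$ gives a deviation $\tfrac{2R}{3}\log\tfrac{2}{\delta'}+\sqrt{2V\log\tfrac{2}{\delta'}}$ after splitting the budget $\delta'$ between the two tails, which is exactly $\zeta_k(T,\delta')$; a union bound over the $|\mathcal Y|$ classes with $\delta'=\delta/|\mathcal Y|$ produces the claimed simultaneous bound with $\zeta_k(T,\delta/|\mathcal Y|)$.

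The main obstacle I anticipate is the martingale setup rather than the concentration itself: I must argue carefully that $e_{t,k}$ is predictable enough to factor out of the policy expectation while $\Delta_{t,k}$ carries the only unaveraged randomness, and that $b^t_k$ is genuinely the predictable ($\mathcal F_{t-1}$-adapted, hence random) quadratic variation, so that the random-variance form of Freedman applies and the constants $\tfrac{2}{3c_k}$ and $\sqrt{2\sum_t b^t_k}$ emerge matching the stated $\zeta_k$. A minor point to address is the sign of $\tau^T_k$: the triangle inequality strictly gives $|\tau^T_k|/(\eta_2 T_k)$, which coincides with the stated bound whenever the accumulated quantile remains nonnegative.
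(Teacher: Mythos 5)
Your proposal is correct and takes essentially the same route as the paper's own proof: the same telescoping of the quantile update with $\tau^0_k=0$, the same martingale difference sequence (the paper's $M_{t,k}$ is your $-\xi_{t,k}$) with identical range bound $1/c_k$ and conditional-variance bound $b^t_k$, and the same Bernstein-type tail inversion yielding $\zeta_k$; the only difference is that the paper derives the Freedman-type inequality from scratch via a Chernoff/moment-generating-function argument (using $(1+u)\log(1+u)-u\geq\frac{u^2}{2+2u/3}$) rather than citing Freedman's inequality directly. Your closing observation about the sign of $\tau^T_k$ is apt: the paper's final two-sided inequality likewise yields the stated bound only when $\tau^T_k\geq 0$, a point it leaves implicit.
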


\cref{thm:cvg} implies the convergence rate of the class-specific coverage guarantee mainly depends on the learning rate $\eta_2$ and the sample size $T_k$ of class $k$. Besides the policy should be bounded strictly below by 0, the additional assumption on $\mathbb{E}[\frac{\mathbbm{1}\{Y_t=k\}}{\pi_t(k\mid \bm X_t)}\mid \mathcal{F}_{t-1}]$ further suggests that the policy should not overly underestimate the proportion of a class; otherwise the empirical coverage gap may increase.

\begin{color}{mycol}
To some extent, \cref{thm:cvg} ensures that the algorithm yields prediction sets with small sizes. This is because an algorithm with a large prediction set size often comes with inflated coverage, yet the theorem states that the empirical non-coverage must not deviate much away from the desired non-coverage of $\alpha$. In particular, \cref{thm:cvg} precludes the trivial case $\widehat{\mathcal{C}}^{t-1}(\boldsymbol X_t)=\mathcal{Y}$ for all $t\in[T]$.
\end{color}

The below corollary highlights the impact of different policies on the convergence rate.
 
\begin{corollary}\label{thm:finercvg}
    Assume the learning rate has the order $\eta_2=\mathcal{O}(T^{-1/2})$. (1) If the policy $\pi_t$ aligns with the Bayes posterior probability, i.e., $\pi_t(k\mid \bm X_t)=\mathbb{P}(Y_t=k\mid \bm X_t)$, then we have $\mathbb{E}[\frac{\mathbbm{1}\{Y_t=k\}}{\pi_t(k\mid \bm X_t)}\mid \mathcal{F}_{t-1}]= b^t_k\leq 1$, and hence
    $\text{CvgGap}_k= \mathcal{O}(\frac{\sqrt{T}}{T_k}).$ (2) If the policy is the uniform distribution, i.e., $\pi_t(k\mid \bm X_t)=\frac{1}{|\mathcal{Y}|}$, then $b^t_k\leq |\mathcal{Y}|p_k$ (here $p_k$ denotes the prior probability of class $k$), and hence 
    $\text{CvgGap}_k= \mathcal{O}(\frac{\sqrt{T|\mathcal{Y}|p_k}}{T_k}).$
\end{corollary}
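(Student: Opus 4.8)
The plan is to obtain \cref{thm:finercvg} as a direct specialization of the coverage bound in \cref{thm:cvg}, so the whole argument splits into two tasks: (i) evaluating $b^t_k=\mathbb{E}[\frac{\mathbbm{1}\{Y_t=k\}}{\pi_t(k\mid\bm X_t)}\mid\mathcal{F}_{t-1}]$ for each policy, and (ii) tracking how the two terms $\frac{\tau^{T}_{k}}{\eta_2 T_k}$ and $\frac{\zeta_k(T,\delta/|\mathcal{Y}|)}{T_k}$ scale once $\eta_2=\mathcal{O}(T^{-1/2})$ is inserted. Nothing beyond \cref{thm:cvg} and elementary conditioning is needed.

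For task (i) I would exploit that $\{(\bm X_t,Y_t)\}$ is i.i.d. and that, in both specified policies, $\pi_t(k\mid\bm X_t)$ is a fixed function of $\bm X_t$ alone, so $(\bm X_t,Y_t)$ is independent of $\mathcal{F}_{t-1}$. Conditioning on $\bm X_t$ and using $\mathbb{E}[\mathbbm{1}\{Y_t=k\}\mid\bm X_t]=\mathbb{P}(Y_t=k\mid\bm X_t)$, the Bayes-posterior choice $\pi_t(k\mid\bm X_t)=\mathbb{P}(Y_t=k\mid\bm X_t)$ makes the ratio equal to $1$ on $\{\mathbb{P}(Y_t=k\mid\bm X_t)>0\}$ and $0$ on its complement (where $\mathbbm{1}\{Y_t=k\}=0$ almost surely), giving $b^t_k=\mathbb{P}(\mathbb{P}(Y_t=k\mid\bm X_t)>0)\leq 1$. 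The uniform choice $\pi_t(k\mid\bm X_t)=1/|\mathcal{Y}|$ has a deterministic denominator, so $b^t_k=|\mathcal{Y}|\,\mathbb{E}[\mathbbm{1}\{Y_t=k\}]=|\mathcal{Y}|p_k$. Summing yields $\sum_{t=1}^T b^t_k\leq T$ and $\sum_{t=1}^T b^t_k\leq T|\mathcal{Y}|p_k$, respectively.

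For task (ii) I would first argue $\tau^{T}_{k}=\mathcal{O}(1)$: since the admissible scores (softmax, APS, RAPS) are bounded, the check-loss update is self-correcting — its subgradient pushes $\tau^t_k$ up by $+\eta_2\Delta_{t,k}\alpha$ below the current quantile and down by $-\eta_2\Delta_{t,k}(1-\alpha)$ above it — so $\tau^t_k$ cannot drift beyond the score range up to an $\mathcal{O}(\eta_2)$ fluctuation. Consequently $\frac{\tau^{T}_{k}}{\eta_2 T_k}=\mathcal{O}(\frac{1}{\eta_2 T_k})=\mathcal{O}(\frac{\sqrt{T}}{T_k})$. Inside $\zeta_k(T,\delta/|\mathcal{Y}|)$ the additive constant $\frac{2}{3c_k}\log\frac{2|\mathcal{Y}|}{\delta}$ contributes only $\mathcal{O}(1/T_k)$, while the square-root term dominates and, after substituting the two bounds on $\sum_t b^t_k$, evaluates to $\mathcal{O}(\sqrt{T}/T_k)$ (Bayes) and $\mathcal{O}(\sqrt{T|\mathcal{Y}|p_k}/T_k)$ (uniform). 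Keeping the dominant rate across the two terms then gives the claimed orders, the $\frac{\tau^{T}_{k}}{\eta_2 T_k}$ contribution being absorbed into the $\zeta_k$ rate in the relevant regime $|\mathcal{Y}|p_k\gtrsim 1$.

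The main obstacle I anticipate is precisely the $\tau^{T}_{k}=\mathcal{O}(1)$ claim underlying task (ii): without it the stated rates are vacuous, so one must justify that the stochastic-gradient recursion for $\tau_k$ remains bounded, which I would pin down through boundedness of the score function together with the sign structure of the check-loss subgradient noted above. A secondary, milder subtlety is the degenerate $0/0$ in the Bayes computation, cleanly resolved by the almost-sure vanishing of $\mathbbm{1}\{Y_t=k\}$ on $\{\mathbb{P}(Y_t=k\mid\bm X_t)=0\}$, which is exactly what turns the equality into the inequality $b^t_k\leq 1$.
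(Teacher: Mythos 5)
Your proposal is correct and takes essentially the same route the paper intends: the paper supplies no separate proof of this corollary, treating it as a direct specialization of \cref{thm:cvg} obtained by computing $b^t_k$ under each policy and inserting $\eta_2=\mathcal{O}(T^{-1/2})$, which is exactly what you do. Your added care --- the self-correcting-update argument for $\tau^{T}_{k}=\mathcal{O}(1)$, the $0/0$ resolution in the Bayes case, and the observation that for the uniform policy the $\frac{\tau^{T}_{k}}{\eta_2 T_k}$ term is absorbed into the claimed rate only when $|\mathcal{Y}|p_k\gtrsim 1$ --- fills in details the paper leaves implicit.
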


\cref{thm:finercvg} implies a convergence rate of 
$\text{CvgGap}_k=\mathcal{O}(T^{-1/2})$ when the learning rate $\eta_2=\mathcal{O}(T^{-1/2})$ and sample size $T_k=\mathcal{O}(T),~k\in\mathcal{Y}$ under both Bayes posterior probability and uniform probability policies. In our experiments, due to the lack of access to the precise data distribution, we instead use the softmax policy, i.e., $\pi_t(k\mid \bm X_t)=\hat{p}(k\mid \bm X_t)$ as defined in \eqref{eq:softmax}, to estimate the Bayes posterior probability. As noted by \citet{tibshirani2019conformal}, there are alternative methods for probability estimation, such as moment matching and Kullback-Leibler Divergence minimization. We refer to related work \cite{sugiyama2012density} for a comprehensive review.

\begin{theorem}\label{thm:checkReg}
Let $p_k$ be the prior probability of class $k\in\mathcal{Y}$, and $\tau_k^*=\argmin_\tau \frac{1}{T}\sum_{t=1}^T\mathbbm{1}\{Y_t=k\}\rho_{\alpha}(s^{t-1}(\bm X_t), \tau)$ be the quantile estimate using all the data instances. Define the empirical regret associated with the check loss in the bandit feedback setting as $\text{Reg}_{k, \rho_\alpha}(T):=\frac{1}{T}\sum_{t=1}^T\Delta_{t,k}\rho_{\alpha}(s^{t-1}(\bm X_t), \tau^{t-1}_{k})-\frac{1}{T}\sum_{t=1}^T\mathbbm{1}\{Y_t=k\}\rho_{\alpha}(s^{t-1}(\bm X_t), \tau_k^*)$. By choosing $\eta_2=\tau^*_kp_k^{1/2}\bigl(\sum_{t=1}^T\mathbb{E}\bigl[\frac{\mathbbm{1}\{Y_t=k\}}{\pi^2_t(k\mid \bm X_t)}\bigr]\bigr)^{-1/2}$, \cref{alg} yields an expected regret
\begin{align}
\mathbb{E}[\text{Reg}_{k, \rho_\alpha}(T)]&\leq 
\frac{\tau^*_k}{T}\sqrt{p_k\sum\limits_{t=1}^T\mathbb{E}\biggl[\frac{\mathbbm{1}\{Y_t=k\}}{\pi^2_t(k\mid \bm X_t)}\biggr]}.\notag
\end{align}
\end{theorem}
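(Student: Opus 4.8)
The plan is to recognize the quantile update in \cref{alg} as online (stochastic) subgradient descent on the sequence of convex per-step losses $\ell_t(\tau):=\Delta_{t,k}\,\rho_\alpha(s^{t-1}(\bm X_t,k),\tau)$. For fixed arguments the check loss $\rho_\alpha(s,\cdot)$ is piecewise linear and convex in $\tau$, with subgradient $-(\alpha-\mathbbm 1\{s<\tau\})$; since $\Delta_{t,k}\ge 0$, the quantity $g_t:=-\Delta_{t,k}\bigl(\alpha-\mathbbm 1\{s^{t-1}(\bm X_t,k)<\tau^{t-1}_k\}\bigr)$ is a subgradient of $\ell_t$ at $\tau^{t-1}_k$, and the update reads exactly $\tau^t_k=\tau^{t-1}_k-\eta_2 g_t$. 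I would therefore take the realized $\tau_k^*$ as a fixed comparator and run the standard potential argument on $(\tau^t_k-\tau_k^*)^2$: expanding the square along the update and telescoping gives, deterministically, $\sum_t[\ell_t(\tau^{t-1}_k)-\ell_t(\tau_k^*)]\le \tfrac{(\tau^0_k-\tau_k^*)^2}{2\eta_2}+\tfrac{\eta_2}{2}\sum_t g_t^2$, where the subgradient inequality $\ell_t(\tau^{t-1}_k)-\ell_t(\tau_k^*)\le g_t(\tau^{t-1}_k-\tau_k^*)$ supplies the left side and the initialization $\tau^0_k=0$ makes the first term $(\tau_k^*)^2/(2\eta_2)$.

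Next I would take expectations and invoke \cref{thm:indicator} to pass between the bandit-weighted and the full-feedback quantities. Because $\tau^{t-1}_k$ and the score $s^{t-1}(\bm X_t,k)$ are $\mathcal F_{t-1}\cup\sigma(\bm X_t)$-measurable and hence independent of the pulled arm $A_t$, conditioning on $\mathcal F_{t-1}$ and $(\bm X_t,Y_t)$ and averaging over $\pi_t$ replaces $\Delta_{t,k}$ by $\mathbbm 1\{Y_t=k\}$ in the iterate sum, so that $\mathbb E[\sum_t\ell_t(\tau^{t-1}_k)]=\mathbb E[\sum_t\mathbbm 1\{Y_t=k\}\rho_\alpha(s^{t-1}(\bm X_t),\tau^{t-1}_k)]$, which is the first half of the target regret. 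For the gradient term I would bound $g_t^2\le\Delta_{t,k}^2$ using $(\alpha-\mathbbm 1\{\cdot\})^2\le 1$, then rewrite $\Delta_{t,k}^2=\mathbbm 1\{Y_t=k\}\,\mathbbm 1\{A_t=k\}/\pi^2_t(k\mid\bm X_t)$ by idempotency and disjointness of the indicators and drop $\mathbbm 1\{A_t=k\}\le 1$, obtaining $\mathbb E[\sum_t g_t^2]\le\sum_t\mathbb E[\mathbbm 1\{Y_t=k\}/\pi^2_t(k\mid\bm X_t)]=:S$, which is exactly the quantity under the square root.

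With these ingredients the expected regret is controlled by a path-length term and the variance proxy $S$, and I would finish by choosing $\eta_2$ to balance the two via AM--GM; the stated $\eta_2=\tau_k^*p_k^{1/2}S^{-1/2}$ is the balance point producing the rate $\tfrac{\tau_k^*}{T}\sqrt{p_kS}$, with the prior probability $p_k$ entering through the comparator/variance accounting (the subgradients $g_t$ are supported only on class-$k$ instances). The main obstacle is the comparator itself: $\tau_k^*$ minimizes the realized cumulative loss and thus depends on the whole trajectory, including future arms $A_t$ through the evolving scores $s^{t-1}$, so it is not $\mathcal F_{t-1}$-measurable. This breaks the naive martingale argument for the comparator cross-term $\mathbb E\bigl[\sum_t(\Delta_{t,k}-\mathbbm 1\{Y_t=k\})\rho_\alpha(s^{t-1}(\bm X_t),\tau_k^*)\bigr]$; working throughout with the bandit loss $\ell_t$ and the realized $\tau_k^*$ keeps the telescoping identity deterministic, which isolates the whole difficulty in this single term, and its careful handling together with the exact tracking of the $p_k$ factor is where the real work lies.
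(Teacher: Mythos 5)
Your skeleton matches the paper's: write $T\cdot\text{Reg}_{k,\rho_\alpha}(T)=\text{Diff}_1+\text{Diff}_2$ with $\ell_t(\tau):=\Delta_{t,k}\rho_\alpha(s^{t-1}(\bm X_t),\tau)$, $\text{Diff}_1=\sum_{t}[\ell_t(\tau^{t-1}_k)-\ell_t(\tau^*_k)]$ and $\text{Diff}_2=\sum_{t}(\Delta_{t,k}-\mathbbm{1}\{Y_t=k\})\rho_\alpha(s^{t-1}(\bm X_t),\tau^*_k)$, bound $\text{Diff}_1$ by a potential argument on $(\tau^t_k-\tau^*_k)^2$, and kill $\text{Diff}_2$ by unbiasedness. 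The genuine gap is quantitative: your \emph{unweighted} online-subgradient telescoping produces the initialization term $(\tau^0_k-\tau^*_k)^2/(2\eta_2)=(\tau^*_k)^2/(2\eta_2)$ with no $p_k$ in it, so balancing against $\frac{\eta_2}{2}S$, where $S:=\sum_{t}\mathbb{E}\bigl[\mathbbm{1}\{Y_t=k\}/\pi^2_t(k\mid\bm X_t)\bigr]$, gives $\tau^*_k\sqrt{S}$; and plugging in the theorem's prescribed $\eta_2=\tau^*_k\sqrt{p_k/S}$ gives $\frac{\tau^*_k\sqrt{S}}{2\sqrt{p_k}}+\frac{\tau^*_k\sqrt{p_kS}}{2}=\tau^*_k\sqrt{p_kS}\cdot\frac{1}{2}(1+p_k^{-1})$, strictly larger than the claimed bound whenever $p_k<1$. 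The $p_k$ is not recoverable from ``comparator/variance accounting'' as you suggest --- the variance proxy is the same $S$ in both arguments. In the paper it comes from running the potential argument with the $\Delta_{t,k}$ weight kept in place: the paper bounds $\text{Diff}_1\le\sum_{t}\Delta_{t,k}\,g_{t-1,k}(\tau^{t-1}_k-\tau^*_k)$ and expands so that the telescoping differences appear as $\sum_{t}\frac{\Delta_{t,k}}{2\eta_2}\bigl[(\tau^{t-1}_k-\tau^*_k)^2-(\tau^t_k-\tau^*_k)^2\bigr]$; taking expectations replaces the weight $\Delta_{t,k}$ by $p_k$ and telescopes to $\frac{p_k(\tau^*_k)^2}{2\eta_2}$, while the squared-update term becomes $\frac{\eta_2}{2}\sum_{t}\mathbb{E}[\Delta^3_{t,k}]\le\frac{\eta_2}{2}S$; balancing these two terms yields exactly $\tau^*_k\sqrt{p_kS}$ at the stated $\eta_2$. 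Without this weighted telescoping your route cannot reach the stated bound.

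On $\text{Diff}_2$ you are right that $\tau^*_k$ depends on the whole trajectory (future samples and future arms enter through the evolving scores), so it is not measurable with respect to the conditioning $\sigma$-algebra and the naive tower-property argument is delicate; the paper in fact dismisses this term in one line by precisely that conditional-unbiasedness assertion, so you have spotted a place where the paper's own proof is terse (the same subtlety also touches the step of pulling $\Delta_{t,k}$ out as $p_k$ in the weighted telescoping, since the bracketed differences depend on $\Delta_{t,k}$ and on $\tau^*_k$). But your proposal does not resolve this term --- you explicitly defer it as ``where the real work lies'' --- so the argument is incomplete on its own terms. In short: same decomposition as the paper, but (i) the unweighted potential argument provably cannot produce the $\sqrt{p_k}$ factor, and (ii) the cross-term is left open.
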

The above expectation is taken over \textcolor{mycol}{over all the randomness, including the data and algorithm}. Note that $\tau_k^*$ is bounded, and hence the upper bound converges to 0.

\begin{corollary}\label{thm:finechk_reg}
For the uniform policy and an appropriately chosen $\eta_2$ as specified in \cref{thm:checkReg}, the expected regret $\mathbb{E}[\text{Reg}_{k, \rho_\alpha}(T)]\leq\frac{\tau^*|\mathcal{Y}|p_k}{\sqrt{T}}$.
\end{corollary}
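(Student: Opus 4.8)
The plan is to treat this as a direct specialization of \cref{thm:checkReg}, substituting the uniform policy into the general regret bound and simplifying the resulting expression. Since \cref{thm:checkReg} already delivers the bound $\mathbb{E}[\text{Reg}_{k,\rho_\alpha}(T)] \leq \frac{\tau^*_k}{T}\sqrt{p_k \sum_{t=1}^T \mathbb{E}[\mathbbm{1}\{Y_t=k\}/\pi_t^2(k\mid \bm X_t)]}$ under the prescribed learning rate, the only work left is to evaluate the inner sum when $\pi_t(k\mid \bm X_t) = 1/|\mathcal{Y}|$ for every $t$ and $k$.

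First I would observe that under the uniform policy the quantity $\pi_t(k\mid \bm X_t) = 1/|\mathcal{Y}|$ is deterministic (it does not depend on $\bm X_t$ or any past randomness), so it factors out of the expectation. This gives $\mathbb{E}[\mathbbm{1}\{Y_t=k\}/\pi_t^2(k\mid \bm X_t)] = |\mathcal{Y}|^2\,\mathbb{E}[\mathbbm{1}\{Y_t=k\}] = |\mathcal{Y}|^2 p_k$, where the last equality uses that the points are i.i.d., so $\mathbb{E}[\mathbbm{1}\{Y_t=k\}] = \mathbb{P}(Y_t=k) = p_k$ for each $t$.

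Summing over $t=1,\dots,T$ yields $\sum_{t=1}^T \mathbb{E}[\mathbbm{1}\{Y_t=k\}/\pi_t^2(k\mid \bm X_t)] = T|\mathcal{Y}|^2 p_k$. Plugging this into the bound of \cref{thm:checkReg}, the quantity under the square root becomes $p_k \cdot T|\mathcal{Y}|^2 p_k = T|\mathcal{Y}|^2 p_k^2$, whose square root is $\sqrt{T}\,|\mathcal{Y}|\,p_k$. The prefactor $\tau^*_k/T$ then collapses the bound to $\tau^*_k |\mathcal{Y}| p_k / \sqrt{T}$, which is exactly the claimed inequality (with $\tau^*$ abbreviating $\tau^*_k$). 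As a consistency check I would also record that the prescribed learning rate specializes to $\eta_2 = \tau^*_k p_k^{1/2}(T|\mathcal{Y}|^2 p_k)^{-1/2} = \tau^*_k/(|\mathcal{Y}|\sqrt{T}) = \mathcal{O}(T^{-1/2})$, matching the rate discussed after \cref{thm:finercvg}.

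There is essentially no serious obstacle here: the corollary is a one-line substitution into \cref{thm:checkReg}. The only point requiring minor care is justifying that the uniform policy is non-random, so that the reciprocal-squared weight leaves the expectation cleanly, and that the i.i.d.\ assumption makes $\mathbb{E}[\mathbbm{1}\{Y_t=k\}]$ equal to the stationary prior $p_k$ for every $t$ — both immediate under the stated assumptions.
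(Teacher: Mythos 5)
Your proposal is correct and matches the paper's (implicit) argument exactly: the paper gives no separate proof of this corollary precisely because it is the one-line substitution you carried out, namely setting $\pi_t(k\mid \bm X_t)=1/|\mathcal{Y}|$ in the bound of \cref{thm:checkReg} so that $\sum_{t=1}^T\mathbb{E}\bigl[\mathbbm{1}\{Y_t=k\}/\pi_t^2(k\mid \bm X_t)\bigr]=T|\mathcal{Y}|^2p_k$, which collapses the bound to $\tau^*_k|\mathcal{Y}|p_k/\sqrt{T}$. Your consistency check that $\eta_2$ specializes to $\tau^*_k/(|\mathcal{Y}|\sqrt{T})=\mathcal{O}(T^{-1/2})$ is a nice addition that confirms the rate claimed in the surrounding discussion.
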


\cref{thm:finechk_reg} indicates that the expected regret adheres to a theoretical convergence rate of $\mathcal{O}(T^{-1/2})$, under the condition that the learning rate $\eta_2=\mathcal{O}(T^{-1/2})$ (it can be achieved when the policy is bounded strictly below by 0). This condition aligns with the findings in \cref{thm:finercvg}. 

\textcolor{mycol}{Both \cref{thm:checkReg,thm:finechk_reg} provide theoretical guarantees for the convergence behavior of \cref{alg} in a parametric rate, indicating its potential effectiveness. This result shows that there exists such a learning rate $\eta_2$ leading to an optimal convergence rate. How to practically obtain such a precise learning rate is a challenging problem.} In practice, as discussed in the work of \citet{gibbs2021adaptive}, the chosen value of $\eta_2$ leads to two distinct scenarios.
A larger value of $\eta_2$ may lead to unstable quantile estimations, causing oscillations in prediction set sizes. Over time, this could result in increasingly larger prediction sets in the online learning process. Conversely, a smaller value of $\eta_2$ slows the convergence rate of the coverage, necessitating more iterations to achieve desired coverage levels. 

\begin{algorithm}[!ht]
  \caption{Bandit Conformal with Experts}\label{alg:expert}
  \begin{algorithmic}[1]
  \REQUIRE Initialize weight matrices $\mathcal{W}^0$, class-specific quantiles $\tau^0_{j,k}=0$, and experts weights $\omega^0_{j, k}=1,~j\in[J], k\in\mathcal{Y}$. A score function $s^t(\cdot, \cdot)$, a policy $\pi_t$ and learning rates $\eta_1, \eta_{2,j}, j\in[J]$.
  \FOR{$t=1, 2, 3, \cdots, T$}
    \STATE Learner receives a query $\bm X_t$
    \STATE Generates a prediction set for the query:  
    \begin{equation*}
    \widehat{\mathcal{C}}^{t-1}(\bm X_t):=\left\{k\in\mathcal{Y}: s^{t-1}(\bm X_t, k)\geq\bar{\tau}^{t-1}_{k}\right\},
    \end{equation*}
    where $\bar{\tau}^{t-1}_{k}=\sum_{j}\omega_{j,k}^{t-1}\tau^{t-1}_{j, k}/\sum_i\omega_{i,k}^{t-1}$
    \STATE Learner pulls an arm $A_t\sim\pi_t$, receives the feedback $\mathbbm{1}\{A_t=Y_t\}$, and computes $\Delta_{t,k}$
  \STATE Update all weights and quantiles: 
  \begin{equation*}
  \resizebox{\linewidth}{!}{\ensuremath{\begin{cases}
    \mathcal{W}^{t}=\mathcal{W}^{t-1}- \eta_1\nabla_\mathcal{W}\mathcal{L}(\bm X_t;\mathcal{W}^{t-1})\vspace{5pt}\\
    \tau^{t}_{j, k} = \tau^{t-1}_{j, k} + \eta_{2,j}\Delta_{t,k}\bigl(\alpha-\mathbbm{1}\{s^{t-1}(\bm X_t, k)< \tau^{t-1}_{j, k}\}\bigr)\vspace{5pt}\\
    \omega^t_{j, k} = \exp(-\frac{1}{\sqrt{t+1}}\sum\limits_{t'\leq t}\Delta_{t',k}\cdot \rho_\alpha(s^{t'-1}(\bm X_{t'}, k), \tau^{t'-1}_{j,k}))
  \end{cases}}}
  \end{equation*}
  \ENDFOR
\end{algorithmic}
\end{algorithm}

To mitigate the above limitation due to the choice of $\eta_2$, we draw inspiration from the adaptive control method in its full feedback setting \citep{zaffran2022adaptive}. We introduce an alternative algorithm, Bandit Conformal with Experts (outlined in \cref{alg:expert}), which eliminates the need for manual tuning of $\eta_2$. Specifically, given a grid of learning rate values $\eta_{2,j}, j\in[J]$, it employs an ensemble methodology to aggregate estimated quantiles associated with $\eta_{2,j}$'s based on past performance. The guiding principle is that as the accumulated check loss decreases, the attention placed on the corresponding estimated quantile grows.

\cref{thm:checkReg_expert} below shows that the aggregated quantile through the experts converges to the optimal quantile estimate among the experts. Specifically, an increase in the number of experts, while maintaining the order $J=\mathcal{O}(1)$, can enhance the chance of achieving an improved learning rate, along with more accurate quantile estimations. This finding underscores the importance of expert integration in improving algorithmic performance if one has no prior idea of the optimal learning rate.

\begin{theorem}\label{thm:checkReg_expert}
Consider $\bar\tau^{t-1}_k$ as the aggregated quantile across $J$($\geq2$) experts as defined in \cref{alg:expert}, and the same $c_k$ defined in \cref{thm:cvg}. 
Then, \cref{alg:expert} yields 
\begin{align}
&\frac{1}{T}\sum_{t=1}^T\Delta_{t,k}\rho_{\alpha}(s^{t-1}(\bm X_t), \bar\tau^{t-1}_k)\notag\\
&-\min_{j\in[J]}\frac{1}{T}\sum_{t=1}^T\Delta_{t,k}\rho_{\alpha}(s^{t-1}(\bm X_t), \tau_{j,k}^{t-1})\notag \\
&\leq \frac{1}{4c_k^2\sqrt{T}}+\frac{2\ln J}{\sqrt{T}}.\notag
\end{align}
\end{theorem}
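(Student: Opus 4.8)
The plan is to recognize the statement as an expert-aggregation (prediction with expert advice) regret bound and to prove it in two stages: first reduce the left-hand side to the regret of an exponentially weighted forecaster via convexity, and then bound that regret with a potential-function argument adapted to the time-varying learning rate $\eta_t = 1/\sqrt{t+1}$ encoded in the weight update of \cref{alg:expert}.

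First I would introduce the per-round loss $\ell^t_{j,k} := \Delta_{t,k}\,\rho_\alpha(s^{t-1}(\bm X_t, k), \tau^{t-1}_{j,k})$ and the normalized weights $p^{t-1}_{j,k} := \omega^{t-1}_{j,k}/\sum_i \omega^{t-1}_{i,k}$, so that $\bar\tau^{t-1}_k = \sum_j p^{t-1}_{j,k}\tau^{t-1}_{j,k}$ is a genuine convex combination of the experts' quantiles. The key structural observation is that the pinball loss $\tau\mapsto \rho_\alpha(s,\tau)$ is convex (it is piecewise linear with left slope $-\alpha$ and right slope $1-\alpha$ at the kink $\tau=s$, and these slopes increase since $1-\alpha > -\alpha$), and multiplying by $\Delta_{t,k}\ge 0$ preserves convexity. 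Jensen's inequality then gives $\Delta_{t,k}\rho_\alpha(s^{t-1}(\bm X_t,k),\bar\tau^{t-1}_k)\le \sum_j p^{t-1}_{j,k}\ell^t_{j,k}$ for every $t$, upper-bounding the aggregated term in the theorem by the weighted-average loss of the experts. Hence the left-hand side of the statement is at most the normalized Hedge regret $\frac{1}{T}\sum_t \sum_j p^{t-1}_{j,k}\ell^t_{j,k} - \min_{j}\frac{1}{T}\sum_t \ell^t_{j,k}$, reducing the problem to a pure online-learning bound against the best fixed expert.

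Next I would establish boundedness of the losses and run the standard potential analysis. Using $\pi_t(k\mid\bm X_t)\ge c_k$ we have $\Delta_{t,k}\in\{0, 1/\pi_t(k\mid\bm X_t)\}\subseteq[0,1/c_k]$, and since the scores and quantiles lie in $[0,1]$ the check loss satisfies $\rho_\alpha\in[0,1]$; therefore $\ell^t_{j,k}\in[0,1/c_k]$. Writing $L^{t}_{j,k}=\sum_{t'\le t}\ell^{t'}_{j,k}$ and $W_t(\eta)=\sum_j \exp(-\eta L^t_{j,k})$, the per-round control comes from Hoeffding's lemma applied to the distribution $p^{t-1}_{\cdot,k}$: it yields $\ln\frac{W_t(\eta)}{W_{t-1}(\eta)}\le -\eta\sum_j p^{t-1}_{j,k}\ell^t_{j,k}+\frac{\eta^2}{8c_k^2}$, where the second-order term uses the range $1/c_k$ of the losses. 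Telescoping across rounds and using $W_T(\eta)\ge\exp(-\eta\min_j L^T_{j,k})$ together with $W_0(\eta)=J$ (so the soft-min lower-bounds the true minimum up to the entropy term $\eta^{-1}\ln J$) would give the fixed-$\eta$ regret bound $\frac{\ln J}{\eta}+\frac{\eta T}{8c_k^2}$.

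The subtle point — and what I expect to be the main obstacle — is that the weight at round $t$ in \cref{alg:expert} uses $\eta_t=1/\sqrt{t+1}$ rather than a fixed $\eta$, so the clean telescoping above breaks: the potential is evaluated at a different learning rate in each round. The fix is to exploit that $\eta\mapsto-\eta^{-1}\ln\bigl(W_t(\eta)/J\bigr)$ is monotone (the soft-min is nondecreasing in $\eta$) while $\eta_t$ is nonincreasing, so changing the learning rate moves the potential only in a controlled direction; summing the per-round increments and collecting the learning-rate-change terms produces the time-varying regret bound $\frac{\ln J}{\eta_T}+\frac{1}{8c_k^2}\sum_{t=1}^T\eta_t$, where the monotonicity correction is what inflates the $\ln J$ coefficient to $2$. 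Finally I would substitute $\eta_t=1/\sqrt{t+1}$, use $\eta_T^{-1}=\sqrt{T+1}$ and $\sum_{t=1}^T 1/\sqrt{t+1}\le 2\sqrt{T}$, and divide by $T$ to recover $\frac{2\ln J}{\sqrt{T}}+\frac{1}{4c_k^2\sqrt{T}}$, taking care that the Hoeffding constant $\tfrac18\cdot(1/c_k)^2$ summed against $\sum_t\eta_t\le 2\sqrt T$ produces exactly $\frac{1}{4c_k^2}$ and that the soft-min bookkeeping yields exactly $2\ln J$.
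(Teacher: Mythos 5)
Your proposal is, at its core, the same argument the paper gives (the paper explicitly follows \citet{cesa2006prediction}): both proofs combine (i) convexity of $\tau\mapsto\Delta_{t,k}\rho_\alpha(s^{t-1}(\bm X_t,k),\tau)$ plus Jensen to replace the loss of the aggregated quantile $\bar\tau^{t-1}_k$ by the experts' weighted-average loss, (ii) Hoeffding's lemma (the paper's \cref{thm:lma1}) applied to the exponential-weights mixture with loss range $1/c_k$, which produces the per-round $\eta^2/(8c_k^2)$ term and hence $\frac{1}{4c_k^2\sqrt T}$ after summing $\sum_{t=1}^T t^{-1/2}\le 2\sqrt{T}$, and (iii) a comparison of exponential-weight potentials at the two adjacent learning rates $1/\sqrt{t}$ and $1/\sqrt{t+1}$. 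The only structural difference is the bookkeeping in (iii): the paper tracks the normalized weight of the running best expert and controls the learning-rate change with its \cref{thm:lma2}, paying $(\sqrt{t+1}-\sqrt{t})\ln J$ twice per round (which is exactly where its $2\ln J$ comes from), whereas you telescope the soft-min potential $F_t(\eta):=-\eta^{-1}\ln\bigl(W_t(\eta)/J\bigr)$ directly.

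Your handling of (iii), however, contains a sign error that you should fix, though it turns out to be benign. With the $1/J$ normalization you wrote, $F_t(\eta)$ is \emph{nonincreasing} in $\eta$: by the power-mean inequality it decreases from the arithmetic mean $\frac1J\sum_j L^t_{j,k}$ (as $\eta\to0$) to $\min_j L^t_{j,k}$ (as $\eta\to\infty$); it is the \emph{unnormalized} soft-min $-\eta^{-1}\ln W_t(\eta)$ that is nondecreasing. This matters for your accounting of the constant: if the learning-rate-change terms really had the sign you assert, monotonicity alone would give no quantitative upper bound on them, and you would be forced back to exactly the paper's \cref{thm:lma2}. With the correct direction your argument closes immediately and is in fact stronger: since $\eta_t=1/\sqrt{t+1}$ decreases in $t$ and $F_t$ is nonincreasing in $\eta$, each correction term $F_t(\eta_{t-1})-F_t(\eta_t)$ is nonpositive, so telescoping the per-round bound $\sum_j p^{t-1}_{j,k}\ell^t_{j,k}\le F_t(\eta_{t-1})-F_{t-1}(\eta_{t-1})+\frac{\eta_{t-1}}{8c_k^2}$ (Hoeffding applied at the learning rate $\eta_{t-1}=1/\sqrt{t}$ actually used by the weights at round $t$) gives $\sum_{t=1}^T\sum_j p^{t-1}_{j,k}\ell^t_{j,k}-\min_{j}L^T_{j,k}\le\frac{\ln J}{\eta_{T-1}}+\frac{1}{8c_k^2}\sum_{t=1}^T\eta_{t-1}\le\sqrt{T}\ln J+\frac{\sqrt{T}}{4c_k^2}$, i.e., coefficient $1$ rather than $2$ on $\ln J/\sqrt{T}$ after dividing by $T$. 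That of course still implies the stated theorem; just do not attribute the factor $2$ to a ``monotonicity correction'' --- in your formulation there is none, and in the paper's formulation it comes from the quantitative comparison lemma, not from monotonicity.
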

Here the assumption for $c_k$ is reasonably flexible as it can be achieved through the policy design.

Notice that, theoretically, the optimal choice of learning rate should vary depending on the class as indicated in \cref{thm:checkReg}. However, for the ease of practical implementation, the same value of $\eta_2$ (or $\eta_{2,j}$) is applied across all classes.

\section{Experiments}\label{sec:expir}
\paragraph{Set-up:}  To assess the effectiveness of our proposed approach, we employ the ResNet50 architecture \citep{he2016deep} for model fitting. Our experimental setup includes the CIFAR10, CIFAR100 (with 20 coarser labels), and SVHN datasets, each undergoing 5 replications. Consistently throughout the study, we maintain a non-coverage rate $\alpha=0.05$. For computational efficiency, the model training is performed on data batches of size 256, utilizing the ADAM optimizer with a learning rate of $\eta_1=10^{-4}$ in the model training phase. The entire online learning process spans $T=6000$ iterations around. We evaluate online classification performance using three score functions: softmax, APS, and RAPS (see their definition in \cref{app:implent}) for both the softmax policy and the uniform policy.

\paragraph{Metrics:} To examine the performance during online prediction for $t\in[T]$, we report both the minimum and maximum accumulative coverage, defined as:
\begin{equation*}
\begin{split}
\text{Acum\_cvg\_min}(t) &= \min_{k\in\mathcal{Y}}\text{Acum\_cvg}(t, k),\\
\text{Acum\_cvg\_max}(t) &= \max_{k\in\mathcal{Y}}\text{Acum\_cvg}(t, k),
\end{split}
\end{equation*}
where $\text{Acum\_cvg}(t, k)$ is defined as
$$\begin{aligned}
   \frac{\sum_{s=1}^t\sum_{\bm X_i\in\mathcal{B}_s}\mathbbm{1}\{Y_i=k~\&~ Y_i\in \widehat{\mathcal{C}}^{t-1}(\bm X_i)\}}{\sum_{s=1}^t\sum_{\bm X_i\in\mathcal{B}_s}\mathbbm{1}\{Y_i=k\}},
\end{aligned}$$
with $\mathcal{B}_s$ representing the batch of the dataset at time point $s$. We include the accumulative prediction set size,
\begin{equation*}
\text{Acum\_size}(t)=\frac{\sum_{s=1}^t\sum_{\bm X_i\in \mathcal{B}_{s}}|\widehat{\mathcal{C}}^{t-1}(\bm X_i)|}{\sum_{s=1}^t|\mathcal{B}_s|},
\end{equation*}
to assess the informativeness of the set-valued classification.

\begin{figure}[!ht]
    \centering
     \includegraphics[width=1\linewidth]{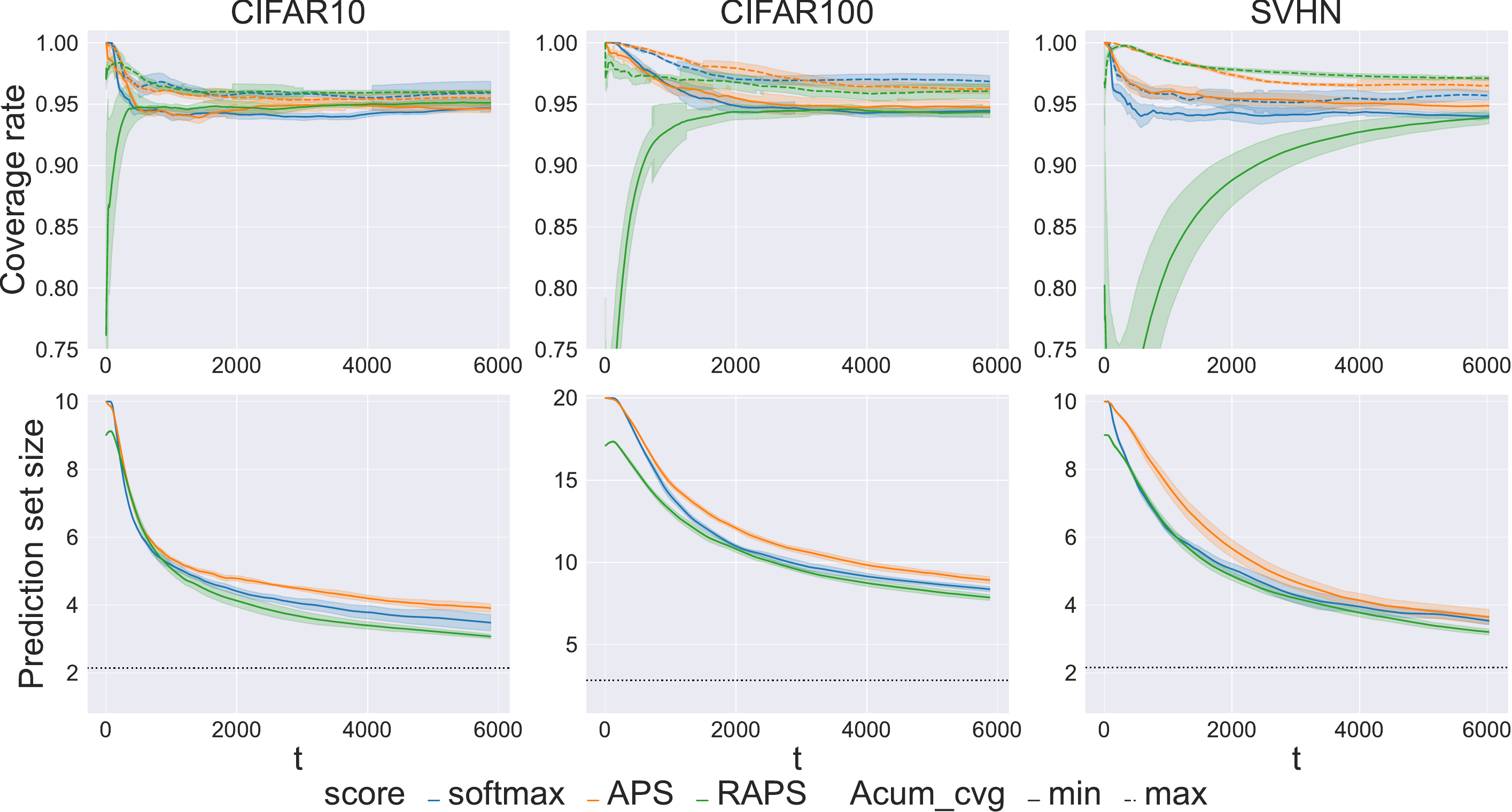}\vspace{-15pt}
    \caption{Performances under \cref{alg} with softmax policy. \textcolor{mycol}{The black dotted lines in the bottom panel denote the oracle performance of the model with access to the full labels.}}\label{fig:alg1softmax}\vspace{-7pt}
\end{figure}

\begin{figure}[!ht]
    \centering
     \includegraphics[width=1\linewidth]{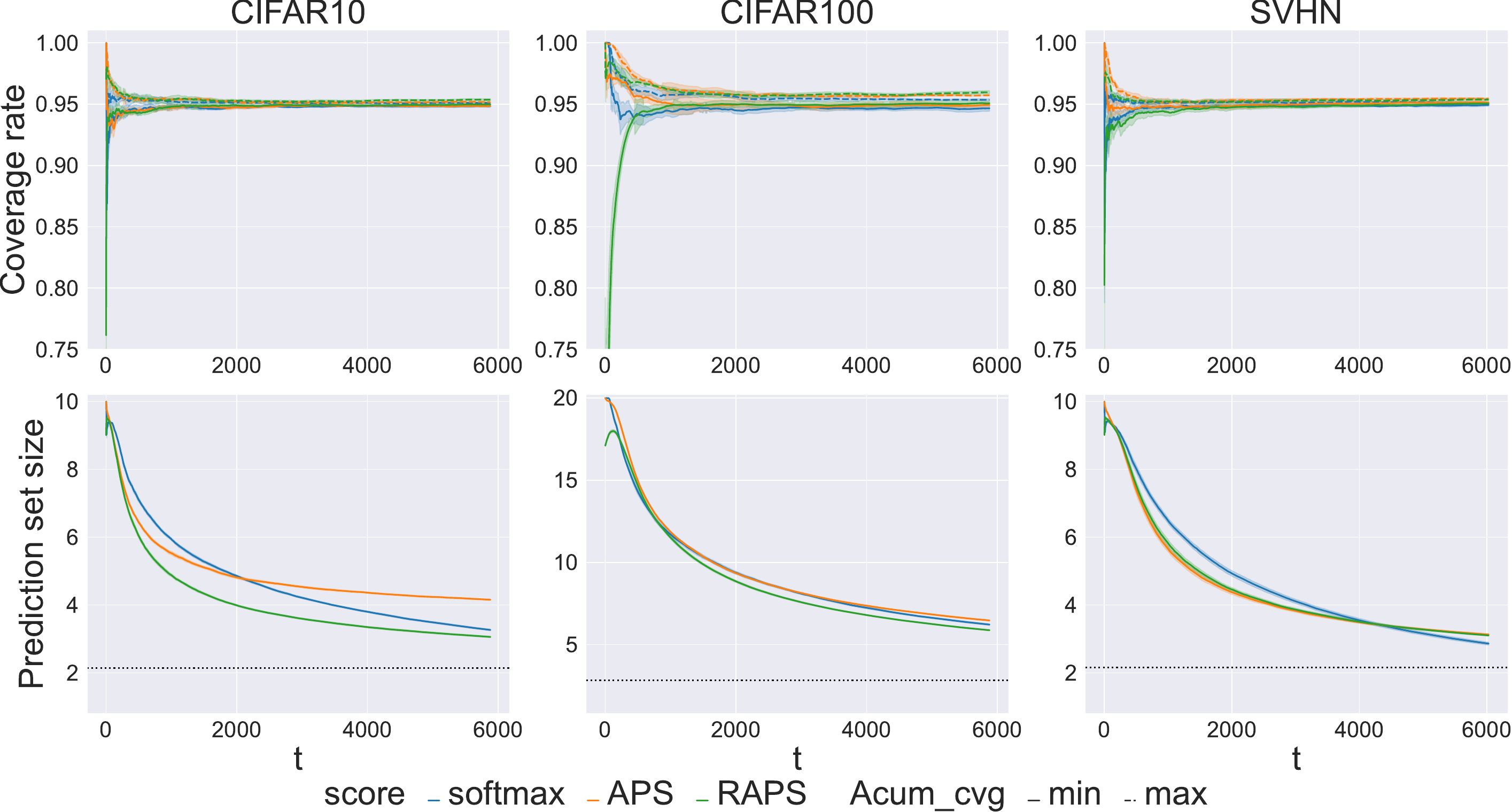}\vspace{-15pt}
    \caption{Performances under \cref{alg} with uniform policy. \textcolor{mycol}{The black dotted lines in the bottom panel denote the oracle performance of the model with access to the full labels.}}\label{fig:alg1uniform}\vspace{-7pt}
\end{figure}

\paragraph{Results:} \cref{fig:alg1softmax,fig:alg1uniform} present the set-valued classification with BCCP in the bandit feedback setting under softmax and uniform policies, respectively. The black dotted lines in the bottom panel for each figure denote the final result of a network after sufficiently many iterations with access to the full labels and the usage of the RAPS score function. As the number of iterations increases, the top panels in \cref{fig:alg1softmax,fig:alg1uniform} reveal that \cref{alg} effectively approaches the prescribed class-specific coverage of 95\%.  Additionally, the bottom panels in these figures indicate a trend towards smaller prediction sets.

The choice of learning rate $\eta_1$ indeed affects the performance of the model training phase and hence the subsequent quantile estimation. However, in our study, we mainly focus on the role of $\eta_2$ instead of particularly optimizing for $\eta_1$. For example, the CIFAR100 experiments utilizing the softmax policy and softmax score are presented with a fine-tuned $\eta_2=5\times 10^{-4}$ (see the tuning strategy and sensitivity studies about $\eta_2$ in \cref{sec:sens}). As discussed below \cref{thm:finechk_reg}, an inappropriate selection of the hyper-parameter $\eta_2$ can result in enlarged prediction set sizes or prolonged convergence times, which hinders the practical applicability of \cref{alg} in more dynamic settings.

\begin{figure}[!th]
    \centering
     \includegraphics[width=1\linewidth]{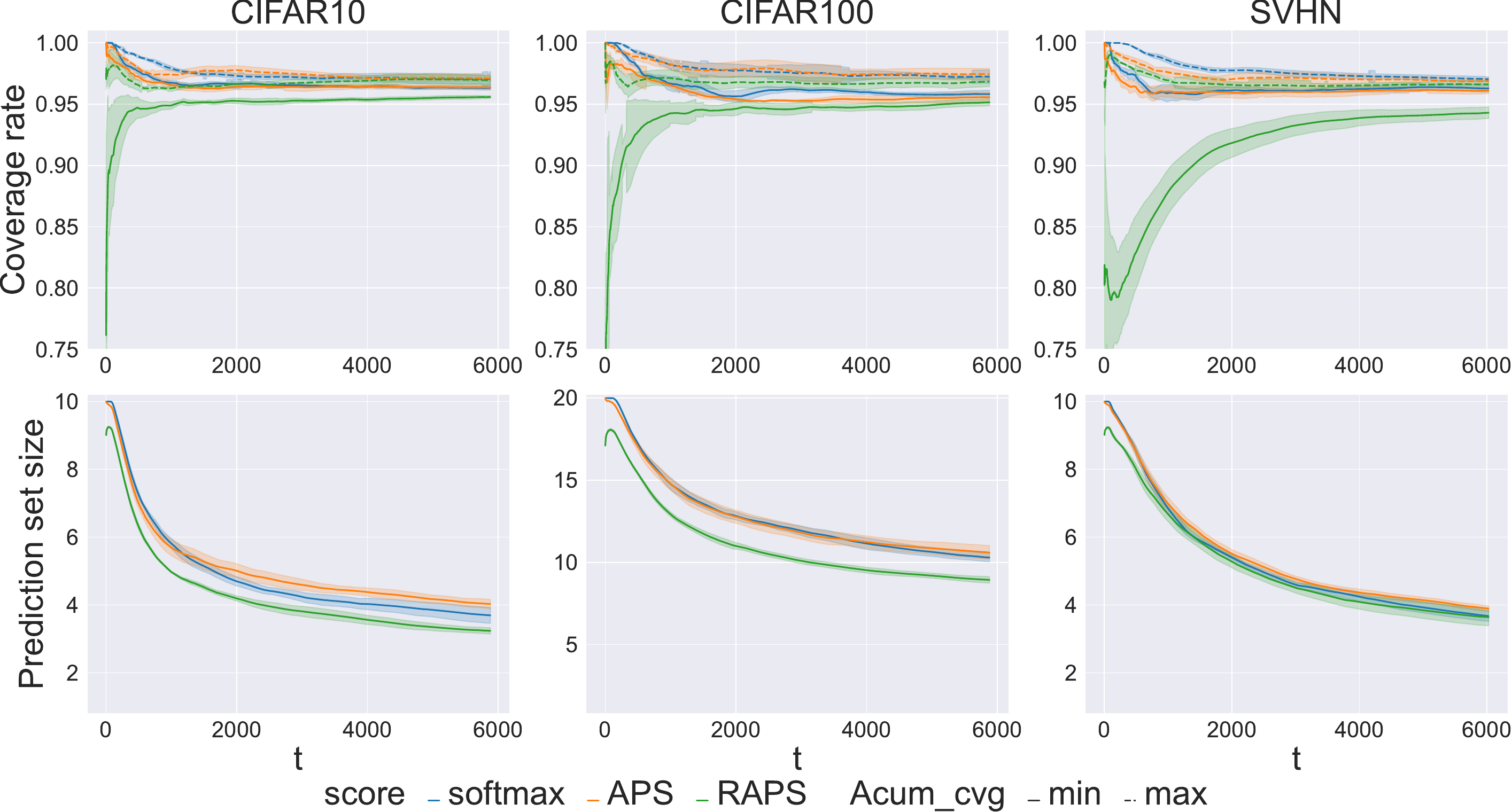}\vspace{-15pt}
    \caption{Performances under \cref{alg:expert} with softmax policy.}\label{fig:alg2softmax}
\end{figure}

\begin{figure}[!th]
    \centering
     \includegraphics[width=1\linewidth]{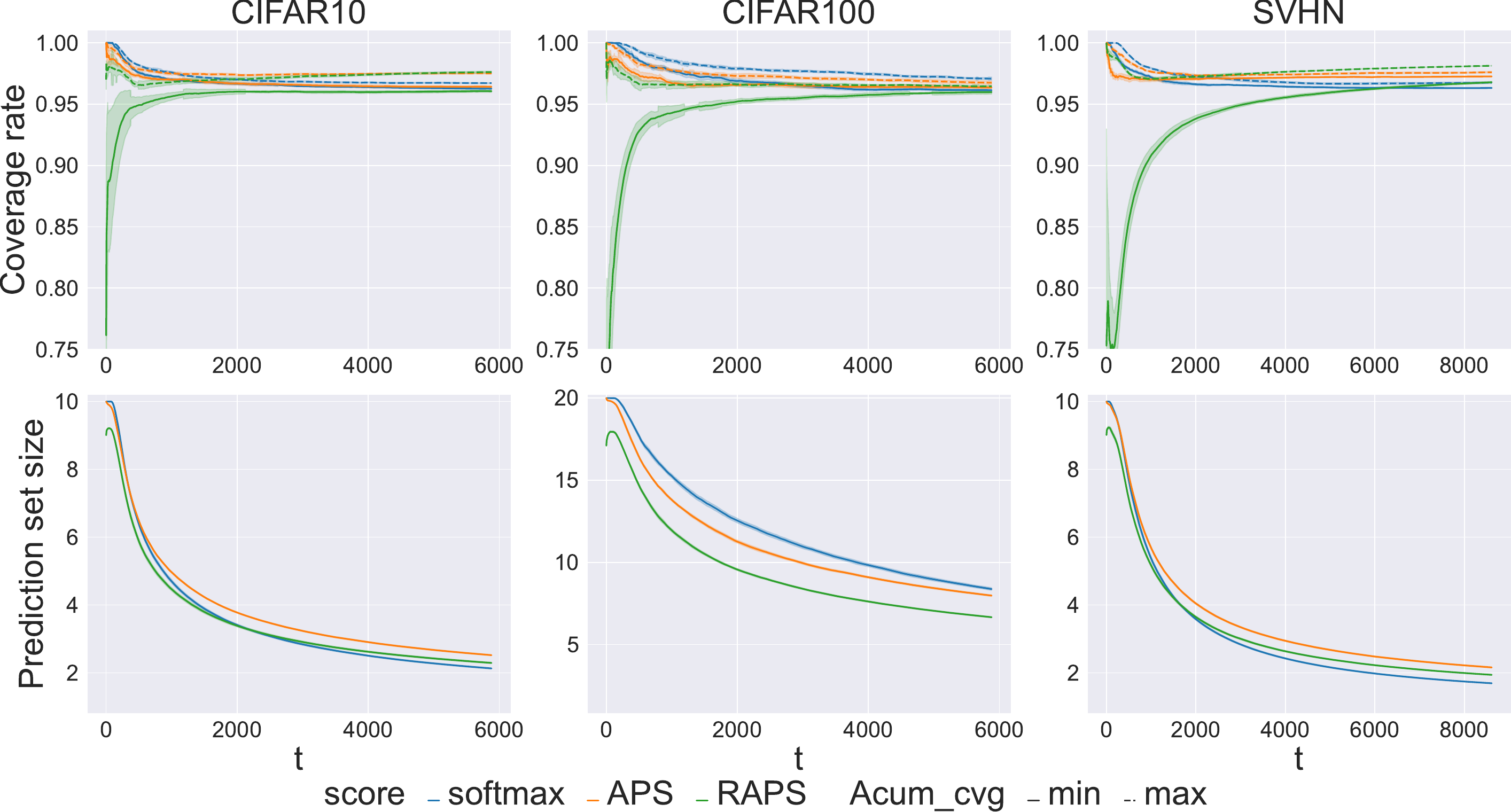}\vspace{-15pt}
    \caption{Performances under \cref{alg:expert} with uniform policy.}\label{fig:alg2uniform}
\end{figure}

To address this limitation, in this study, we employed a range of learning rate values, i.e., [0.1, 0.01, 0.001, 0.0001], through an expert-based approach in \cref{alg:expert}. The results are shown in \cref{fig:alg2softmax,fig:alg2uniform}. Notably, while using the softmax policy, the results from \cref{fig:alg2softmax} indicate that the prediction set sizes from \cref{alg:expert} are only marginally larger compared to those from \cref{alg} with carefully tuned $\eta_2$. With the uniform policy, \cref{alg:expert} demonstrates more efficient performance, yielding smaller prediction sets for the CIFAR10 and SVHN datasets. Notably, the RAPS score function outperforms the other scores in producing smaller prediction sets on the dataset when there are many classes, i.e., CIFAR100.

\section{Conclusion}\label{sec:conclusion}
In this article, we extend Conformal Prediction to the framework of online bandit feedback, where a learner is only told whether or not a pulled arm is correct in a dynamic multi-class classification problem. We make use of an unbiased indicator function estimation of the ground truth to overcome the incomplete information in the feedback, allowing the proposed Bandit Class-specific Conformal Prediction (BCCP) to effectively make set-valued inferences and adaptively fit the model accordingly. Particularly, the indicator function estimation allows us to utilize stochastic gradient descent to efficiently achieve the quantile estimation instead of the traditional split conformal, which requires sufficient labeled calibration data and might not be realistic in the setting of bandit feedback. Theoretically, we show the $\mathcal{O}(T^{-1/2})$ convergence rate for both the coverage guarantee and the regret of the check loss under certain conditions. Empirically, the experiments conducted on three datasets with three score functions and two policies demonstrated the effectiveness of BCCP. 

Our research opens several promising avenues for future exploration. One potential direction is the investigation of alternative indicator function estimations or policy designs that could offer improved theoretical or empirical performance. Additionally, refining the coverage guarantee within specific fixed-size time windows \citep{bhatnagar2023improved} instead of the full-time horizon in our work could further bolster the reliability of BCCP over different time scales. Moreover, expanding the scope of BCCP to address challenges such as covariate shift \cite{tibshirani2019conformal} and semantic shift \cite{JMLR:v24:23-0712} could significantly broaden its applicability. 

In conclusion, our work not only contributes a novel and provable solution to the problem of online multi-class classification with bandit feedback but also sets another new direction in conformal prediction. It opens up possibilities for real-world applications and lays a foundation for further research domains.

\begin{color}{mycol}
\section*{Impact Statements}
This paper presents work that aims to advance the field of Machine Learning. There are many potential societal consequences of our work, none of which we feel must be specifically highlighted here.
\end{color}


\bibliography{example_paper}

\begin{thebibliography}{48}
\providecommand{\natexlab}[1]{#1}
\providecommand{\url}[1]{\texttt{#1}}
\expandafter\ifx\csname urlstyle\endcsname\relax
  \providecommand{\doi}[1]{doi: #1}\else
  \providecommand{\doi}{doi: \begingroup \urlstyle{rm}\Url}\fi

\bibitem[Abbasi-Yadkori et~al.(2011)Abbasi-Yadkori, P{\'a}l, and Szepesv{\'a}ri]{abbasi2011improved}
Abbasi-Yadkori, Y., P{\'a}l, D., and Szepesv{\'a}ri, C.
\newblock Improved algorithms for linear stochastic bandits.
\newblock \emph{Advances in neural information processing systems}, 24, 2011.

\bibitem[Angelopoulos et~al.(2021)Angelopoulos, Bates, Jordan, and Malik]{angelopoulos2021uncertainty}
Angelopoulos, A.~N., Bates, S., Jordan, M., and Malik, J.
\newblock Uncertainty sets for image classifiers using conformal prediction.
\newblock In \emph{International Conference on Learning Representations}, 2021.
\newblock URL \url{https://openreview.net/forum?id=eNdiU_DbM9}.

\bibitem[Auer et~al.(2002)Auer, Cesa-Bianchi, and Fischer]{auer2002finite}
Auer, P., Cesa-Bianchi, N., and Fischer, P.
\newblock Finite-time analysis of the multiarmed bandit problem.
\newblock \emph{Machine learning}, 47:\penalty0 235--256, 2002.

\bibitem[Balasubramanian et~al.(2014)Balasubramanian, Ho, and Vovk]{balasubramanian2014conformal}
Balasubramanian, V., Ho, S.-S., and Vovk, V.
\newblock \emph{Conformal prediction for reliable machine learning: theory, adaptations and applications}.
\newblock Newnes, 2014.

\bibitem[Bartlett \& Wegkamp(2008)Bartlett and Wegkamp]{bartlett2008classification}
Bartlett, P.~L. and Wegkamp, M.~H.
\newblock Classification with a reject option using a hinge loss.
\newblock \emph{Journal of Machine Learning Research}, 9\penalty0 (Aug):\penalty0 1823--1840, 2008.

\bibitem[Bhatnagar et~al.(2023)Bhatnagar, Wang, Xiong, and Bai]{bhatnagar2023improved}
Bhatnagar, A., Wang, H., Xiong, C., and Bai, Y.
\newblock Improved online conformal prediction via strongly adaptive online learning.
\newblock In \emph{International Conference on Machine Learning}, pp.\  2337--2363. PMLR, 2023.

\bibitem[Cesa-Bianchi \& Lugosi(2006)Cesa-Bianchi and Lugosi]{cesa2006prediction}
Cesa-Bianchi, N. and Lugosi, G.
\newblock \emph{Prediction, learning, and games}.
\newblock Cambridge university press, 2006.

\bibitem[Charoenphakdee et~al.(2021)Charoenphakdee, Cui, Zhang, and Sugiyama]{charoenphakdee2021classification}
Charoenphakdee, N., Cui, Z., Zhang, Y., and Sugiyama, M.
\newblock Classification with rejection based on cost-sensitive classification.
\newblock In \emph{International Conference on Machine Learning}, pp.\  1507--1517. PMLR, 2021.

\bibitem[Crammer \& Gentile(2013)Crammer and Gentile]{crammer2013multiclass}
Crammer, K. and Gentile, C.
\newblock Multiclass classification with bandit feedback using adaptive regularization.
\newblock \emph{Machine learning}, 90\penalty0 (3):\penalty0 347--383, 2013.

\bibitem[Ding et~al.(2024)Ding, Angelopoulos, Bates, Jordan, and Tibshirani]{ding2024class}
Ding, T., Angelopoulos, A., Bates, S., Jordan, M., and Tibshirani, R.~J.
\newblock Class-conditional conformal prediction with many classes.
\newblock \emph{Advances in Neural Information Processing Systems}, 36, 2024.

\bibitem[Gibbs \& Candes(2021)Gibbs and Candes]{gibbs2021adaptive}
Gibbs, I. and Candes, E.
\newblock Adaptive conformal inference under distribution shift.
\newblock \emph{Advances in Neural Information Processing Systems}, 34:\penalty0 1660--1672, 2021.

\bibitem[Gibbs \& Cand{\`e}s(2022)Gibbs and Cand{\`e}s]{gibbs2022conformal}
Gibbs, I. and Cand{\`e}s, E.
\newblock Conformal inference for online prediction with arbitrary distribution shifts.
\newblock \emph{arXiv preprint arXiv:2208.08401}, 2022.

\bibitem[Gollapudi et~al.(2021)Gollapudi, Guruganesh, Kollias, Manurangsi, Leme, and Schneider]{gollapudi2021contextual}
Gollapudi, S., Guruganesh, G., Kollias, K., Manurangsi, P., Leme, R., and Schneider, J.
\newblock Contextual recommendations and low-regret cutting-plane algorithms.
\newblock \emph{Advances in Neural Information Processing Systems}, 34:\penalty0 22498--22508, 2021.

\bibitem[Guan \& Tibshirani(2022)Guan and Tibshirani]{guan2022prediction}
Guan, L. and Tibshirani, R.
\newblock Prediction and outlier detection in classification problems.
\newblock \emph{Journal of the Royal Statistical Society. Series B, Statistical Methodology}, 84\penalty0 (2):\penalty0 524, 2022.

\bibitem[He et~al.(2016)He, Zhang, Ren, and Sun]{he2016deep}
He, K., Zhang, X., Ren, S., and Sun, J.
\newblock Deep residual learning for image recognition.
\newblock In \emph{Proceedings of the IEEE conference on computer vision and pattern recognition}, pp.\  770--778, 2016.

\bibitem[Hechtlinger et~al.(2018)Hechtlinger, P{\'o}czos, and Wasserman]{hechtlinger2018cautious}
Hechtlinger, Y., P{\'o}czos, B., and Wasserman, L.
\newblock Cautious deep learning.
\newblock \emph{arXiv preprint arXiv:1805.09460}, 2018.

\bibitem[Herbei \& Wegkamp(2006)Herbei and Wegkamp]{herbei2006classification}
Herbei, R. and Wegkamp, M.~H.
\newblock Classification with reject option.
\newblock \emph{The Canadian Journal of Statistics/La Revue Canadienne de Statistique}, pp.\  709--721, 2006.

\bibitem[Jin et~al.(2021)Jin, Xu, Shi, Xiao, and Gu]{jin2021mots}
Jin, T., Xu, P., Shi, J., Xiao, X., and Gu, Q.
\newblock Mots: Minimax optimal thompson sampling.
\newblock In \emph{International Conference on Machine Learning}, pp.\  5074--5083. PMLR, 2021.

\bibitem[Kakade et~al.(2008)Kakade, Shalev-Shwartz, and Tewari]{kakade2008efficient}
Kakade, S.~M., Shalev-Shwartz, S., and Tewari, A.
\newblock Efficient bandit algorithms for online multiclass prediction.
\newblock In \emph{Proceedings of the 25th international conference on Machine learning}, pp.\  440--447, 2008.

\bibitem[Koenker \& Bassett~Jr(1978)Koenker and Bassett~Jr]{koenker1978regression}
Koenker, R. and Bassett~Jr, G.
\newblock Regression quantiles.
\newblock \emph{Econometrica: journal of the Econometric Society}, pp.\  33--50, 1978.

\bibitem[Lai \& Robbins(1985)Lai and Robbins]{lai1985asymptotically}
Lai, T.~L. and Robbins, H.
\newblock Asymptotically efficient adaptive allocation rules.
\newblock \emph{Advances in applied mathematics}, 6\penalty0 (1):\penalty0 4--22, 1985.

\bibitem[Langford \& Zhang(2007)Langford and Zhang]{langford2007epoch}
Langford, J. and Zhang, T.
\newblock The epoch-greedy algorithm for contextual multi-armed bandits.
\newblock \emph{Advances in neural information processing systems}, 20\penalty0 (1):\penalty0 96--1, 2007.

\bibitem[Lei(2014)]{lei2014classification}
Lei, J.
\newblock Classification with confidence.
\newblock \emph{Biometrika}, 101\penalty0 (4):\penalty0 755--769, 2014.

\bibitem[Lei et~al.(2013)Lei, Robins, and Wasserman]{lei2013distribution}
Lei, J., Robins, J., and Wasserman, L.
\newblock Distribution-free prediction sets.
\newblock \emph{Journal of the American Statistical Association}, 108\penalty0 (501):\penalty0 278--287, 2013.

\bibitem[Lei et~al.(2015)Lei, Rinaldo, and Wasserman]{lei2015conformal}
Lei, J., Rinaldo, A., and Wasserman, L.
\newblock A conformal prediction approach to explore functional data.
\newblock \emph{Annals of Mathematics and Artificial Intelligence}, 74\penalty0 (1-2):\penalty0 29--43, 2015.

\bibitem[Lei et~al.(2018)Lei, G’Sell, Rinaldo, Tibshirani, and Wasserman]{lei2018distribution}
Lei, J., G’Sell, M., Rinaldo, A., Tibshirani, R.~J., and Wasserman, L.
\newblock Distribution-free predictive inference for regression.
\newblock \emph{Journal of the American Statistical Association}, 113\penalty0 (523):\penalty0 1094--1111, 2018.

\bibitem[Papadopoulos et~al.(2002)Papadopoulos, Proedrou, Vovk, and Gammerman]{papadopoulos2002inductive}
Papadopoulos, H., Proedrou, K., Vovk, V., and Gammerman, A.
\newblock Inductive confidence machines for regression.
\newblock In \emph{Machine Learning: ECML 2002: 13th European Conference on Machine Learning Helsinki, Finland, August 19--23, 2002 Proceedings 13}, pp.\  345--356. Springer, 2002.

\bibitem[Romano et~al.(2019)Romano, Patterson, and Candes]{romano2019conformalized}
Romano, Y., Patterson, E., and Candes, E.
\newblock Conformalized quantile regression.
\newblock \emph{Advances in neural information processing systems}, 32, 2019.

\bibitem[Romano et~al.(2020)Romano, Sesia, and Candes]{romano2020classification}
Romano, Y., Sesia, M., and Candes, E.
\newblock Classification with valid and adaptive coverage.
\newblock \emph{Advances in Neural Information Processing Systems}, 33:\penalty0 3581--3591, 2020.

\bibitem[Sadinle et~al.(2019)Sadinle, Lei, and Wasserman]{sadinle2019least}
Sadinle, M., Lei, J., and Wasserman, L.
\newblock Least ambiguous set-valued classifiers with bounded error levels.
\newblock \emph{Journal of the American Statistical Association}, 114\penalty0 (525):\penalty0 223--234, 2019.

\bibitem[Shafer \& Vovk(2008)Shafer and Vovk]{shafer2008tutorial}
Shafer, G. and Vovk, V.
\newblock A tutorial on conformal prediction.
\newblock \emph{Journal of Machine Learning Research}, 9\penalty0 (Mar):\penalty0 371--421, 2008.

\bibitem[Stanton et~al.(2023)Stanton, Maddox, and Wilson]{stanton2023bayesian}
Stanton, S., Maddox, W., and Wilson, A.~G.
\newblock Bayesian optimization with conformal prediction sets.
\newblock In \emph{International Conference on Artificial Intelligence and Statistics}, pp.\  959--986. PMLR, 2023.

\bibitem[Sugiyama et~al.(2012)Sugiyama, Suzuki, and Kanamori]{sugiyama2012density}
Sugiyama, M., Suzuki, T., and Kanamori, T.
\newblock \emph{Density ratio estimation in machine learning}.
\newblock Cambridge University Press, 2012.

\bibitem[Takeuchi et~al.(2006)Takeuchi, Le, Sears, and Smola]{JMLR:v7:takeuchi06a}
Takeuchi, I., Le, Q.~V., Sears, T.~D., and Smola, A.~J.
\newblock Nonparametric quantile estimation.
\newblock \emph{Journal of Machine Learning Research}, 7\penalty0 (45):\penalty0 1231--1264, 2006.
\newblock URL \url{http://jmlr.org/papers/v7/takeuchi06a.html}.

\bibitem[Taufiq et~al.(2022)Taufiq, Ton, Cornish, Teh, and Doucet]{taufiq2022conformal}
Taufiq, M.~F., Ton, J.-F., Cornish, R., Teh, Y.~W., and Doucet, A.
\newblock Conformal off-policy prediction in contextual bandits.
\newblock \emph{Advances in Neural Information Processing Systems}, 35:\penalty0 31512--31524, 2022.

\bibitem[Tibshirani et~al.(2019)Tibshirani, Foygel~Barber, Candes, and Ramdas]{tibshirani2019conformal}
Tibshirani, R.~J., Foygel~Barber, R., Candes, E., and Ramdas, A.
\newblock Conformal prediction under covariate shift.
\newblock \emph{Advances in neural information processing systems}, 32, 2019.

\bibitem[van~der Hoeven et~al.(2021)van~der Hoeven, Fusco, and Cesa-Bianchi]{van2021beyond}
van~der Hoeven, D., Fusco, F., and Cesa-Bianchi, N.
\newblock Beyond bandit feedback in online multiclass classification.
\newblock \emph{Advances in neural information processing systems}, 34:\penalty0 13280--13291, 2021.

\bibitem[Vovk et~al.(2005)Vovk, Gammerman, and Shafer]{vovk2005algorithmic}
Vovk, V., Gammerman, A., and Shafer, G.
\newblock \emph{Algorithmic learning in a random world}.
\newblock Springer Science \& Business Media, 2005.

\bibitem[Wang et~al.(2010)Wang, Jin, and Valizadegan]{wang2010potential}
Wang, S., Jin, R., and Valizadegan, H.
\newblock A potential-based framework for online multi-class learning with partial feedback.
\newblock In \emph{Proceedings of the Thirteenth International Conference on Artificial Intelligence and Statistics}, pp.\  900--907. JMLR Workshop and Conference Proceedings, 2010.

\bibitem[Wang \& Qiao(2018)Wang and Qiao]{wang2018learning}
Wang, W. and Qiao, X.
\newblock Learning confidence sets using support vector machines.
\newblock In \emph{Advances in Neural Information Processing Systems}, pp.\  4929--4938, 2018.

\bibitem[Wang \& Qiao(2022)Wang and Qiao]{wang2022set}
Wang, W. and Qiao, X.
\newblock Set-valued support vector machine with bounded error rates.
\newblock \emph{Journal of the American Statistical Association}, pp.\  1--13, 2022.

\bibitem[Wang \& Qiao(2023)Wang and Qiao]{JMLR:v24:23-0712}
Wang, Z. and Qiao, X.
\newblock Set-valued classification with out-of-distribution detection for many classes.
\newblock \emph{Journal of Machine Learning Research}, 24\penalty0 (375):\penalty0 1--39, 2023.
\newblock URL \url{http://jmlr.org/papers/v24/23-0712.html}.

\bibitem[Xu et~al.(2022)Xu, Wen, Zhao, and Gu]{xu2022neural}
Xu, P., Wen, Z., Zhao, H., and Gu, Q.
\newblock Neural contextual bandits with deep representation and shallow exploration.
\newblock In \emph{International Conference on Learning Representations}, 2022.
\newblock URL \url{https://openreview.net/forum?id=xnYACQquaGV}.

\bibitem[Zaffran et~al.(2022)Zaffran, F{\'e}ron, Goude, Josse, and Dieuleveut]{zaffran2022adaptive}
Zaffran, M., F{\'e}ron, O., Goude, Y., Josse, J., and Dieuleveut, A.
\newblock Adaptive conformal predictions for time series.
\newblock In \emph{International Conference on Machine Learning}, pp.\  25834--25866. PMLR, 2022.

\bibitem[Zhang et~al.(2018)Zhang, Wang, and Qiao]{zhang2018reject}
Zhang, C., Wang, W., and Qiao, X.
\newblock On reject and refine options in multicategory classification.
\newblock \emph{Journal of the American Statistical Association}, 113\penalty0 (522):\penalty0 730--745, 2018.

\bibitem[Zhang et~al.(2021)Zhang, Zhou, Li, and Gu]{zhang2021neural}
Zhang, W., Zhou, D., Li, L., and Gu, Q.
\newblock Neural thompson sampling.
\newblock In \emph{International Conference on Learning Representations}, 2021.
\newblock URL \url{https://openreview.net/forum?id=tkAtoZkcUnm}.

\bibitem[Zhang et~al.(2023)Zhang, Shi, and Luo]{zhang2023conformal}
Zhang, Y., Shi, C., and Luo, S.
\newblock Conformal off-policy prediction.
\newblock In \emph{International Conference on Artificial Intelligence and Statistics}, pp.\  2751--2768. PMLR, 2023.

\bibitem[Zhou et~al.(2020)Zhou, Li, and Gu]{zhou2020neural}
Zhou, D., Li, L., and Gu, Q.
\newblock Neural contextual bandits with ucb-based exploration.
\newblock In \emph{International Conference on Machine Learning}, pp.\  11492--11502. PMLR, 2020.

\end{thebibliography}
\bibliographystyle{icml2024}

\clearpage

\appendix
\onecolumn

\section{Conformity Scores}\label{app:implent}
Let $\hat p(k\mid \bm X), k\in\mathcal{Y}$ as defined in \eqref{eq:softmax} be the estimated posterior probability based on the neural network $\bm f_{\mathcal{W}}(\bm X)$. Thus, for the test point $(\bm X, Y)$,
the softmax score is defined as $$s(\bm X, k)=\hat p(k\mid \bm X).$$
Sort the estimated posterior probabilities $\hat p(k\mid \bm X), k\in\mathcal{Y}$ with the ascending order such that $\hat p(k_1\mid \bm X)\leq \hat p(k_2\mid \bm X)\leq \cdots\leq \hat p(k_{|\mathcal{Y}|}\mid \bm X).$ Additionally, denote $r$ as the index such that $k_r=Y$. Thus, the APS score is defined as
$$s(\bm X, k)=1-\sum_{l=1}^{r-1}\hat p(k_l\mid \bm X)-U \cdot \hat p(k_r\mid \bm X),$$ where $U$ is a random variable sampled from the uniform distribution on the interval $[0, 1]$. 

Let $k_{reg}$ be the number above which the prediction set size will be penalized with the penalty $\lambda$. Thus, the RAPS is defined as
$$s(\bm X, k)=1-\sum_{l=1}^{r-1}\hat p(k_l\mid \bm X)-U \cdot \hat p(k_r\mid \bm X)-\lambda\cdot[r - k_{reg}]_+,$$
where $[\cdot]_+=\max(0, \cdot)$. By following the similar routine in \citet{ding2024class}, in our experiments, we pick $\lambda=0.01$ and $k_{reg}=5$ for CIFAR100 while $k_{reg}=1$ for the remaining two less difficult datasets.

\section{Extra Studies on $\eta_2$}\label{sec:sens}

\begin{figure}[!b]
    \centering
     \includegraphics[width=0.95\linewidth]{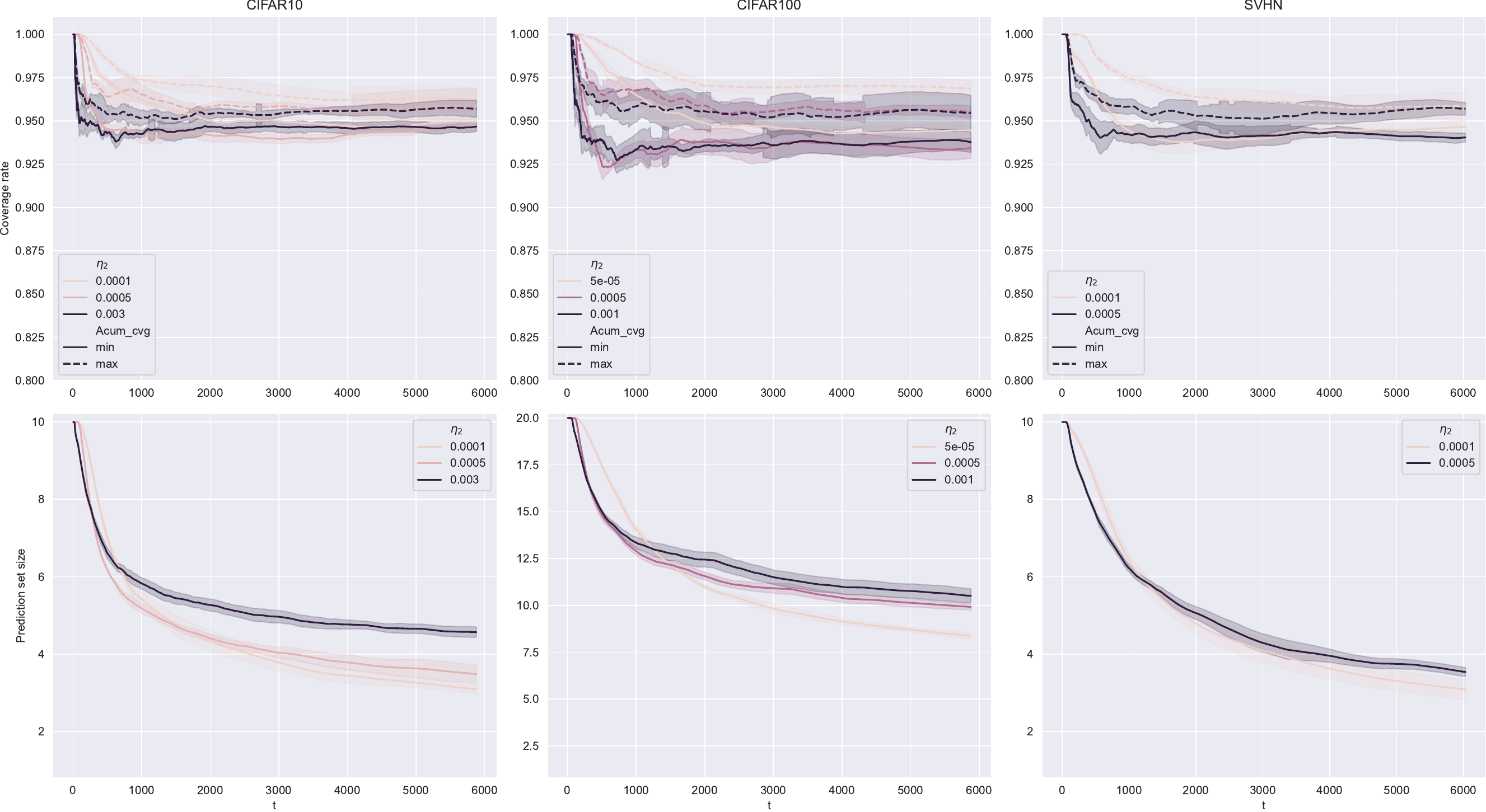}\vspace{-10pt}
    \caption{Performances under \cref{alg} with softmax policy and softmax score.}\label{fig:softmax_softmax}
\end{figure}

In our study, we adopt the learning rate tuning approach as described by \citet{gibbs2021adaptive}, selecting a value that ensures a stable learning trajectory characterized by a balance between smaller prediction set sizes and satisfactory coverage convergence. However, this tuning strategy presents challenges in practical applications. Specifically, different datasets require distinct optimal learning rate values, and identifying these values through manual tuning is both time-consuming and less adaptive. To illustrate these challenges, we conducted sensitivity analyses on the impact of varying $\eta_2$ in \cref{alg}. These studies underscore the limitations of manually tuning a single $\eta_2$ value.

Our findings, presented in \cref{fig:softmax_softmax,fig:uniform_softmax,fig:softmax_RAPS,fig:uniform_RAPS}, explore the sensitivity of the learning rate $\eta_2$. We observed that a higher $\eta_2$ value accelerates coverage control, as indicated by the darker lines in the top panels of each figure. However, this generally comes at the cost of enlarged prediction set sizes, evident from the darker lines in the bottom panels of the figures. Moreover, the prediction set size shows considerable sensitivity to variations in $\eta_2$. This highlights the practical limitations of \cref{alg} and underscores the necessity of implementing \cref{alg:expert}, which utilizes an expert-based method to aggregate results across multiple learning rates $\eta_{2,j}, j\in[J]$. This approach not only addresses the challenges of manual tuning but also enhances the algorithm's adaptability and effectiveness across diverse datasets.

\begin{figure}[!ht]
    \centering
     \includegraphics[width=0.95\linewidth]{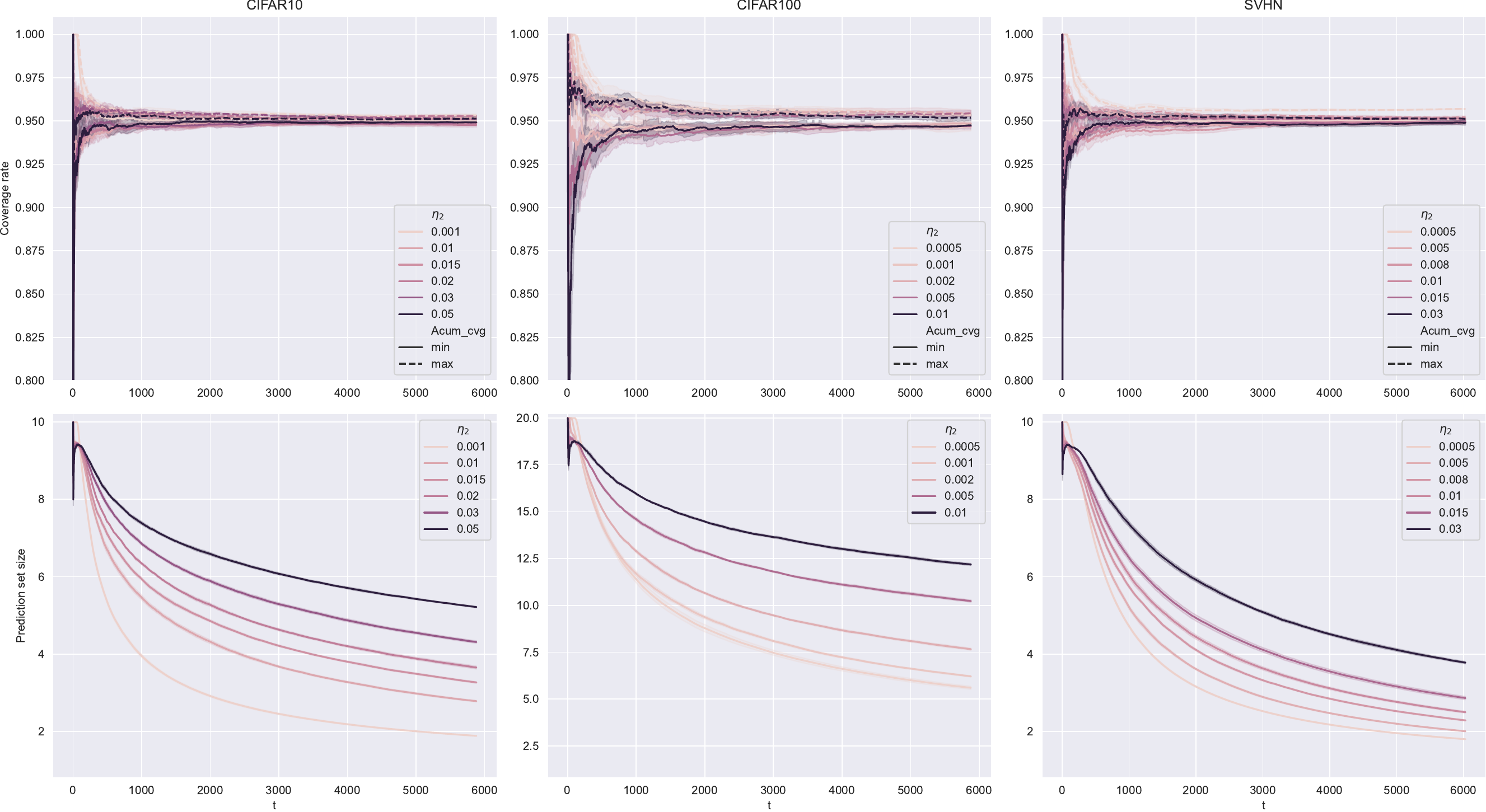}\vspace{-10pt}
    \caption{Performances under \cref{alg} with uniform policy and softmax score.}\label{fig:uniform_softmax}
\end{figure}

\begin{figure}[!ht]
    \centering
     \includegraphics[width=0.95\linewidth]{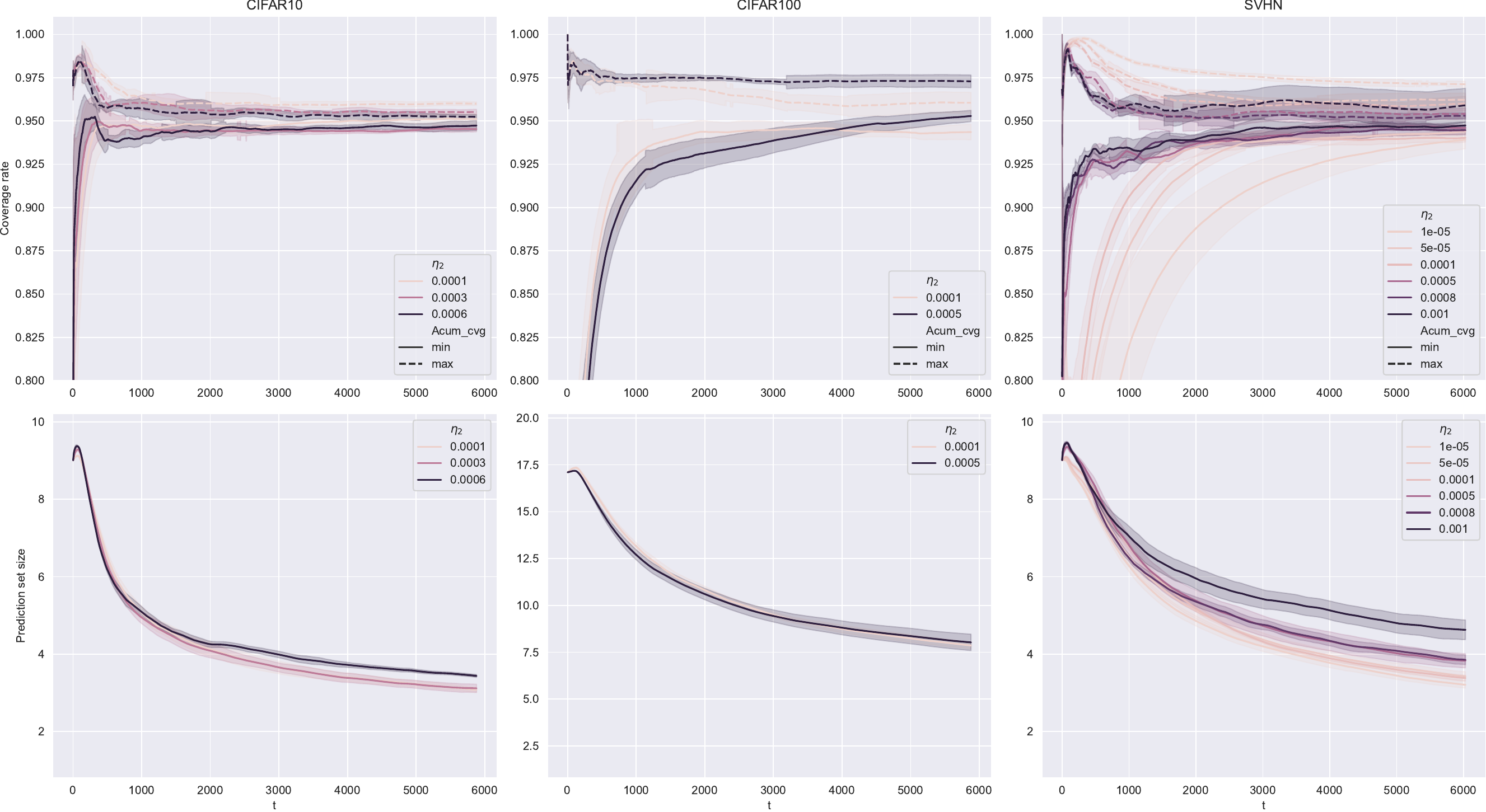}\vspace{-10pt}
    \caption{Performances under \cref{alg} with softmax policy and RAPS score.}\label{fig:softmax_RAPS}
\end{figure}

\begin{figure}[!ht]
    \centering
     \includegraphics[width=0.95\linewidth]{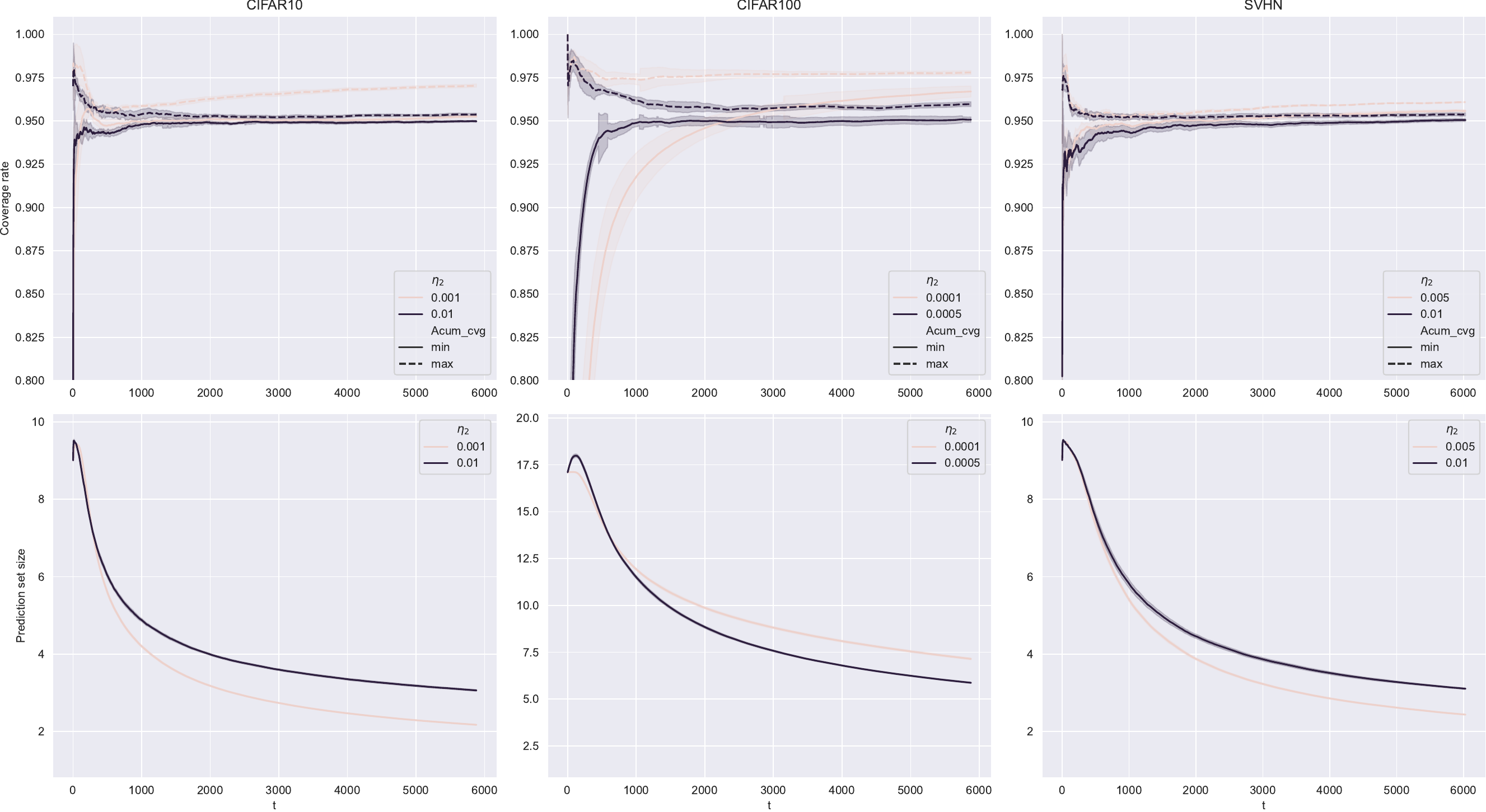}\vspace{-10pt}
    \caption{Performances under \cref{alg} with uniform policy and RAPS score.}\label{fig:uniform_RAPS}
\end{figure}

\begin{color}{mycol}

\begin{figure*}[!b]
    \centering
     \includegraphics[width=0.95\linewidth]{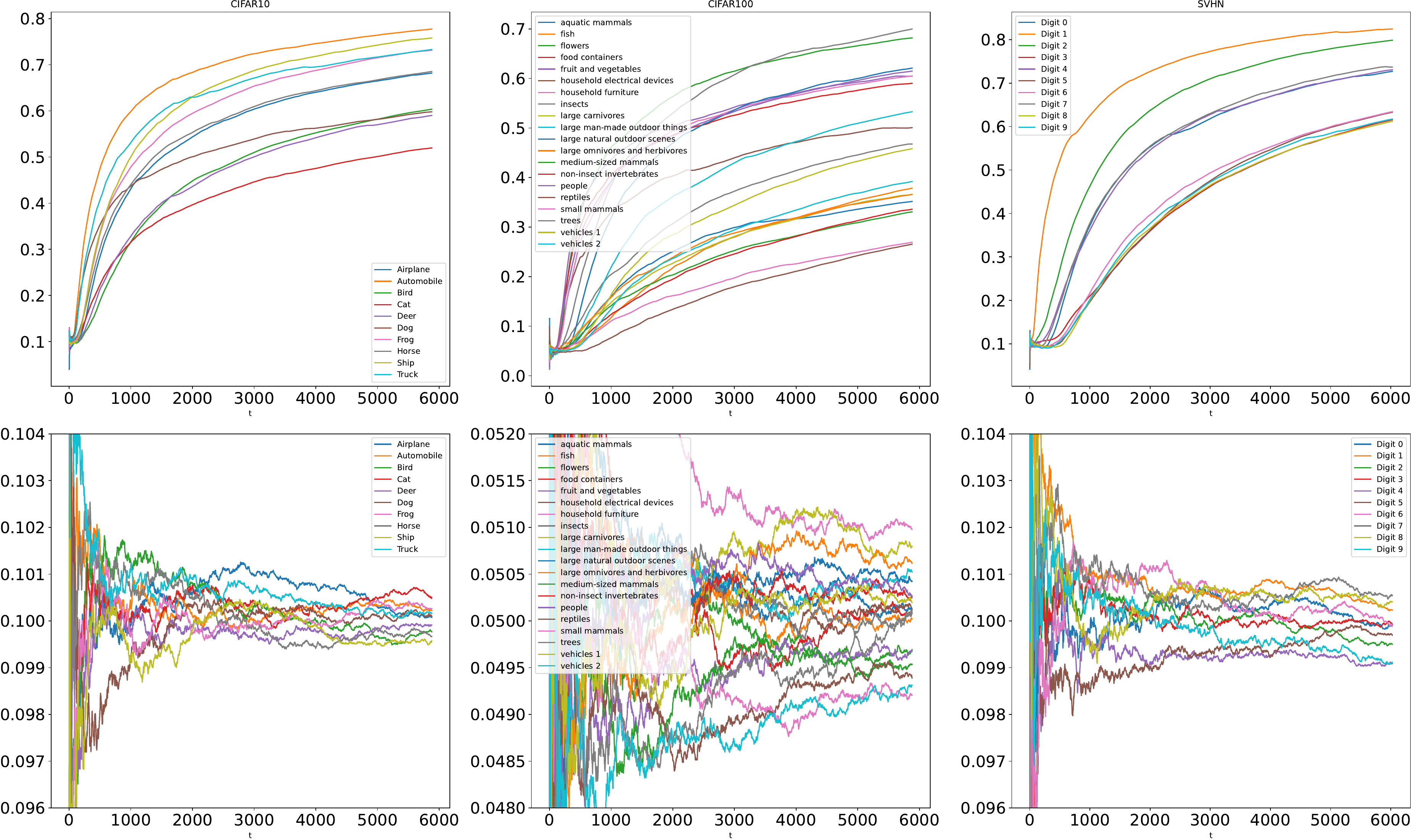}\vspace{-10pt}
    \caption{Proportion of correctly pulled arm with RAPS score under softmax (top) and uniform (bottom) policy.}\label{fig:corr_arm}
\end{figure*}

\section{Discussion of Policy $\pi_t$} 

In this section, for each class, we show the effectiveness of different policies on the correctness of arm pulling, i.e., $\mathbb{P}(A_t=Y_t\mid Y_t=k), ~k\in\mathcal{Y}$. In \cref{fig:corr_arm}, under the softmax policy (top panel) and the uniform policy (bottom panel), we report the accumulative performance of arm pulling for each class, i.e., 
$$\frac{\sum_{s=1}^t\sum_{\bm X_i\in\mathcal{B}_s}\mathbbm{1}\{A_i=k\}}{\sum_{s=1}^t\sum_{\bm X_i\in\mathcal{B}_s}\mathbbm{1}\{Y_i=k\}},~k\in\mathcal{Y}.$$

Due to the usage of context $\bm X_i$ in each batch $\mathcal{B}_s, s\leq t$, softmax policy leads to higher accuracy for arm pulling. In contrast, the uniform policy's correctness is close to $\frac{1}{|\mathcal{Y}|}$. These behaviors align with the one in cross-entropy loss minimization in \cref{fig:CE}, where the softmax policy quickly decreases the loss compared to the uniform policy. On the other hand, when it comes to the performance of set-valued classification in \cref{fig:alg1softmax,fig:alg2softmax}, the uniform policy both converges faster to the desired coverage rate and gets slightly smaller prediction sets on average than the softmax policy.

The above interesting phenomenon may mirror the exploration-exploitation dilemma in reinforcement learning. Specifically, the softmax policy capitalizes on more known information characterized by $\hat p(k\mid \boldsymbol X_t)$ as defined in \labelcref{eq:softmax} and hence ``guesses" labels with higher frequent success. Such a policy can greedily and quickly decrease the cross-entropy loss but sacrifices the performance of the set-valued prediction. In contrast, the uniform policy has a higher capability of exploration, possibly leading to the fast empirical convergence of coverage rate and smaller prediction sets, even though it has an inferior capability to reduce the cross-entropy loss in each iteration.

\end{color}

\clearpage

\section{Proofs}
\begin{proof}[Proof of \cref{thm:cvg}]\label{pf:cvg}

Define $M_{t,k}:=[\Delta_{t,k}-\mathbbm{1}\{Y_t=k\}]\cdot [\alpha-\mathbbm{1}\{s^{t-1}(\bm X_t, k)< \tau^{t-1}_{k}\}]$ and 
$$\begin{aligned}
    \mathbb{V}[M_{t,k}\mid \mathcal{F}_{t-1}]&:=\mathbb{E}_{(\bm X_t, Y_t)}\left[\mathbb{E}\left[\frac{\mathbbm{1}\{A_t=k\}\cdot\mathbbm{1}\{A_t=Y_t\}}{\pi_t^2(k\mid \bm X_t)}[\alpha-\mathbbm{1}\{s^{t-1}(\bm X_t, k)< \tau^{t-1}_{k}\}]^2\mid \mathcal{F}_{t-1}, (\bm X_t, Y_t)\right]\right]\\
    &=\mathbb{E}_{(\bm X_t, Y_t)}\biggl[\frac{\mathbbm{1}\{Y_t=k\}}{\pi_t(k\mid \bm X_t)}[\alpha-\mathbbm{1}\{s^{t-1}(\bm X_t, k)< \tau^{t-1}_{k}\}]^2\mid \mathcal{F}_{t-1}\biggr]\\
    &\leq \mathbb{E}_{(\bm X_t, Y_t)}\biggl[\frac{\mathbbm{1}\{Y_t=k\}}{\pi_t(k\mid \bm X_t)}\mid \mathcal{F}_{t-1}\biggr]=\mathbb{E}\biggl[\frac{\mathbbm{1}\{Y_t=k\}}{\pi_t(k\mid \bm X_t)}\mid \mathcal{F}_{t-1}\biggr].        
\end{aligned}
$$ 

Additionally, we have  
\begin{align}
    |M_{t,k}|\leq\frac{1}{c_k}~~\mbox{and}~~\mathbb{E}[M_{t,k}\mid \mathcal{F}_{t-1}]=\mathbb{E}_{(\bm X_t, Y_t)}[\mathbb{E}[M_{t,k}\mid \mathcal{F}_{t-1}, (\bm X_t, Y_t)]]=0.
\end{align}

Then, by utilizing the Chernoff bound, for any $\xi> 0$, we have 
\begin{align}
   \mathbb{P}\biggl[\sum_{t=1}^TM_{t,k}\geq\varepsilon\biggr]&\leq\exp(-\xi\varepsilon)\cdot \mathbb{E}\biggl[\exp(\xi \sum_{t=1}^TM_{t,k})\biggr]\notag \\
   &= \exp(-\xi\varepsilon)\cdot \mathbb{E}\biggl[\mathbb{E}\biggl[\exp(\xi \sum_{t=1}^{T-1}M_{t,k} +\xi M_{T,k})\mid \mathcal{F}_{T-1}\biggr]\biggr]\notag\\
   &=\exp(-\xi\varepsilon)\cdot \mathbb{E}\biggl[\exp(\xi \sum_{t=1}^{T-1}M_{t,k})\cdot \mathbb{E}\left[\exp(\xi M_{T,k})\mid \mathcal{F}_{T-1}\right]\biggr]\notag\\
   &\leq \exp(-\xi\varepsilon)\cdot \mathbb{E}\biggl[\exp(\xi \sum_{t=1}^{T-1}M_{t,k})\cdot \exp\left(\mathbb{V}\left[ M_{T,k}\mid \mathcal{F}_{T-1}\right]c^2_k(\exp(\xi/c_k)-c^2_k-c_k\xi)\right)\biggr]\label{eq:taylorIn}\\
   &\leq \exp(-\xi\varepsilon)\cdot \exp\left(b^T_k\cdot c^2_k(\exp(\xi/c_k)-c^2_k-c_k\xi)\right)\cdot\mathbb{E}\biggl[\exp(\xi \sum_{t=1}^{T-1}M_{t,k})\biggr]\notag\\
   &\leq\exp\left( c^2_k(\exp(\xi/c_k)-c^2_k-c_k\xi)\sum_{t=1}^Tb^t_k-\xi\varepsilon\right)\notag\\
   &=\exp\left(-c^2_k\sum_{t=1}^Tb^t_k\cdot\left[-\frac{\varepsilon/c_k}{\sum_{t=1}^Tb^t_k}+(\frac{\varepsilon/c_k}{\sum_{t=1}^Tb^t_k}+1)\cdot\log(\frac{\varepsilon/c_k}{\sum_{t=1}^Tb^t_k}+1)\right]\right)\label{eq:optimalxi},
\end{align}
where \labelcref{eq:taylorIn} holds due to
\begin{align}
    \mathbb{E}\left[\exp(\xi M_{t,k})\mid \mathcal{F}_{t-1}\right]&=1+\mathbb{E}\left[\xi M_{t,k}\mid \mathcal{F}_{t-1}\right]+\mathbb{E}\left[\sum\limits_{n=2}^\infty\frac{\xi^n M^n_{t,k}}{n!}\mid \mathcal{F}_{t-1}\right]\notag\\
    &\leq1+\mathbb{E}\left[M^2_{t,k}\sum\limits_{n=2}^\infty\frac{\xi^n |M_{t,k}|^{n-2}}{n!}\mid \mathcal{F}_{t-1}\right]\notag\\
    &\leq1+\mathbb{V}[M_{t,k}\mid \mathcal{F}_{t-1}]\sum\limits_{n=2}^\infty\frac{\xi^n}{c_k^{n-2}n!}\notag\\
    &=1+\mathbb{V}[M_{t,k}\mid \mathcal{F}_{t-1}]c_k^2(\exp(\xi/c_k)-c_k^2-c_k\xi)\notag\\
    &\leq \exp\left(\mathbb{V}[M_{t,k}\mid \mathcal{F}_{t-1}]c_k^2(\exp(\xi/c_k)-c_k^2-c_k\xi)\right),\notag
\end{align}
and \labelcref{eq:optimalxi} holds since we set $\xi=c_k\log(\frac{\varepsilon/c_k}{\sum_{t=1}^Tb^t_k}+1)$.

By applying the fact of $(1+u)\log(1+u)-u\geq\frac{u^2}{2+2u/3}, ~u\geq0$ on \labelcref{eq:optimalxi}, we have $ \mathbb{P}\biggl[\sum_{t=1}^TM_{t,k}\geq\varepsilon\biggr]\leq\exp(-\frac{\varepsilon^2}{2\sum_{t=1}^Tb^t_k+2\varepsilon/(3c_k)})$, and hence $$\mathbb{P}\biggl[\left|\sum_{t=1}^TM_{t,k}\right|\geq\varepsilon\biggr]\leq2\exp(-\frac{\varepsilon^2}{2\sum_{t=1}^Tb^t_k+2\varepsilon/(3c_k)}).$$
Thus, with the probability at least $1-\delta$, we have
\begin{align}\label{eq:cvgdiff}
 &~~~~~\biggl|\sum_{t=1}^T\Delta_{t,k}\cdot [\alpha-\mathbbm{1}\{s^{t-1}(\bm X_t, k)< \tau^{t-1}_{k}\}]-\mathbbm{1}\{Y_t=k\}\cdot[\alpha- \mathbbm{1}\{s^{t-1}(\bm X_t, k)< \tau^{t-1}_{k}\}]\biggr|\notag\\
 &=\left|\sum_{t=1}^TM_{t,k}\right|\notag\\
 &\leq\frac{1}{3c_k}\log\frac{2}{\delta}+\sqrt{(\frac{1}{3c_k}\log\frac{2}{\delta})^2+2\sum_{t=1}^Tb^t_k\log\frac{2}{\delta}}\notag\\
 &\leq\frac{2}{3c_k}\log\frac{2}{\delta}+\sqrt{2\log\frac{2}{\delta}\sum_{t=1}^Tb^t_k}:=\zeta_k(T, \delta)   
\end{align}

Deriving from the updating rule for the quantile estimation in \cref{alg}, we have
\begin{align}\label{eq:itersum}
 &~~\tau^{T}_{k}=\tau^0_{k} + \eta_2  \sum\limits_{t=1}^T\Delta_{t,k}\cdot\big[\alpha- \mathbbm{1}\{s^{t-1}(\bm X_t, k)< \tau^{t-1}_{k}\}\bigr]\notag\\
    \Longrightarrow&~~\sum\limits_{t=1}^T\Delta_{t,k}\cdot [\alpha-\mathbbm{1}\{s^{t-1}(\bm X_t, k)< \tau^{t-1}_{k}\}] =\frac{\tau^{T}_{k}}{\eta_2}.
   \end{align}
Therefore, combing \eqref{eq:cvgdiff} with \eqref{eq:itersum}, with probability at least $1-\delta$, we have
\begin{equation}
    \begin{aligned}
        \frac{\tau^{T}_{k}}{\eta_2}-\zeta_k(T,\delta)&\leq\sum_{t=1}^T\mathbbm{1}\{Y_t=k\}\cdot[\alpha- \mathbbm{1}\{s^{t-1}(\bm X_t, k)< \tau^{t-1}_{k}\}]\leq \frac{\tau^{T}_{k}}{\eta_2}+\zeta_k(T,\delta)\notag\\
        \Rightarrow\frac{\tau^{T}_{k}}{\eta_2 T_k}-\frac{\zeta_k(T,\delta)}{T_k}&\leq\alpha-\sum_{t=1}^T\frac{\mathbbm{1}\{Y_t=k\}}{T_k}\cdot\mathbbm{1}\{Y_t\not\in\widehat{\mathcal{C}}^{t-1}(\bm X_t)\}\leq \frac{\tau^{T}_{k}}{\eta_2 T_k}+\frac{\zeta_k(T,\delta)}{T_k}
    \end{aligned}
\end{equation}
\end{proof}

\begin{proof}[Proof of \cref{thm:checkReg}]
Recall the definition $\tau_k^*=\min_\tau \frac{1}{T}\sum\limits_{t=1}^T\mathbbm{1}\{Y_t=k\}\rho_{\alpha}(s^{t-1}(\bm X_t), \tau)$. Thus,
\begin{align}
    T\cdot\text{Reg}_{k, \rho_\alpha}(T)&=\sum\limits_{t=1}^T\Delta_{t,k}\rho_{\alpha}(s^{t-1}(\bm X_t), \tau^{t-1}_{k})-\sum\limits_{t=1}^T\mathbbm{1}\{Y_t=k\}\rho_{\alpha}(s^{t-1}(\bm X_t), \tau_k^*)\notag\\
    &=\underbrace{\sum\limits_{t=1}^T\Delta_{t,k}\rho_{\alpha}(s^{t-1}(\bm X_t), \tau^{t-1}_{k})-\sum\limits_{t=1}^T\Delta_{t,k}\rho_{\alpha}(s^{t-1}(\bm X_t), \tau_k^*)}_{\text{Diff}_1}\notag\\
    &~~~+\underbrace{\sum\limits_{t=1}^T\Delta_{t,k}\rho_{\alpha}(s^{t-1}(\bm X_t), \tau_k^*)-\sum\limits_{t=1}^T\mathbbm{1}\{Y_t=k\}\rho_{\alpha}(s^{t-1}(\bm X_t), \tau_k^*)}_{\text{Diff}_2}\notag,
\end{align}
where $\mathbb{E}[\text{Diff}_2]=0$ since $\Delta_{t,k}$ is an unbiased estimator of $\mathbbm{1}\{Y_t=k\}$ conditional on $\mathcal{F}_{t-1}\cup(\bm X_t, Y_t)$. Additionally, we have
$$\begin{aligned}
    \text{Diff}_1&\leq\sum\limits_{t=1}^T\Delta_{t,k}\cdot g_{t-1,k}\cdot(\tau^{t-1}_k-\tau^*_k),~~~~~~~~~\text{here (sub)gradient}~ g_{t-1, k}:=-\Delta_{t,k}[\alpha-\mathbbm{1}\{s^{t-1}(\bm X_t)<\tau^{t-1}_k\}]\\
    &=\sum\limits_{t=1}^T\frac{\Delta_{t,k}}{\eta_2}\cdot (\tau^{t-1}_k-\tau^{t}_k)\cdot(\tau^{t-1}_k-\tau^*_k)\\
    &=\sum\limits_{t=1}^T\frac{\Delta_{t,k}}{2\eta_2}\cdot[(\tau^{t-1}_k-\tau^{t}_k)^2+(\tau^{t-1}_k-\tau^*_k)^2-(\tau^{t}_k-\tau^{*}_k)] \\
    &=\sum\limits_{t=1}^T\frac{\Delta_{t,k}^3\eta_2}{2}\cdot[\alpha-\mathbbm{1}\{s^{t-1}(\bm X_t)<\tau^{t-1}_k\}]^2+\sum\limits_{t=1}^T\frac{\Delta_{t,k}}{2\eta_2}\cdot[(\tau^{t-1}_k-\tau^*_k)^2-(\tau^{t}_k-\tau^{*}_k)^2], \end{aligned}$$ which further implies 
    
$$\begin{aligned}
\mathbb{E}[ \text{Diff}_1]&\leq \sum\limits_{t=1}^T\frac{\eta_2}{2}\mathbb{E}\biggl[\frac{\mathbbm{1}\{Y_t=k\}}{\pi^2_t(k\mid \bm X_t)}\biggr]+\sum\limits_{t=1}^T\frac{p_k}{2\eta_2}\cdot\mathbb{E}[(\tau^{t-1}_k-\tau^*_k)^2-(\tau^{t}_k-\tau^{*}_k)^2]\\
&=\frac{\eta_2}{2}\sum\limits_{t=1}^T\mathbb{E}\biggl[\frac{\mathbbm{1}\{Y_t=k\}}{\pi^2_t(k\mid \bm X_t)}\biggr]+\frac{p_k}{2\eta_2}\mathbb{E}[(\tau^0_k-\tau^*_k)^2-(\tau^{T}_k-\tau^{*}_k)^2]\\
&\leq \frac{\eta_2}{2}\sum\limits_{t=1}^T\mathbb{E}\biggl[\frac{\mathbbm{1}\{Y_t=k\}}{\pi^2_t(k\mid \bm X_t)}\biggr]+\frac{p_k(\tau_k^*)^2}{2\eta_2}\\
&=\tau^*_k\sqrt{p_k\sum\limits_{t=1}^T\mathbb{E}\biggl[\frac{\mathbbm{1}\{Y_t=k\}}{\pi^2_t(k\mid \bm X_t)}\biggr]}~~~\text{by choosing}~\eta_2=\tau^*_k\sqrt{\frac{p_k}{\sum\limits_{t=1}^T\mathbb{E}\bigl[\frac{\mathbbm{1}\{Y_t=k\}}{\pi^2_t(k\mid \bm X_t)}\bigr]}}
\end{aligned}$$

\end{proof}

To prove \cref{thm:checkReg_expert}, we follow a similar argument in \citet{cesa2006prediction} with two introduced lemmas. Additionally, our proof relies on the assumption that the check loss function $\rho_{\alpha}$ is bounded. It holds once the score function is bounded, e.g., the softmax, APS, and RAPS scores utilized in our study. Therefore, without loss of generality, we assume  $|\rho_{\alpha}(\cdot, \cdot)|\leq 1$.
\begin{lemma}\label{thm:lma1}
    Let $X$ be a random variable with $a\leq X\leq b$. Then for any $s\in\mathbb{R}$,
    $$\ln\mathbb{E}[\exp(sX)]\leq s\mathbb{E}[X]+\frac{s^2(b-a)^2}{8}.$$
\end{lemma}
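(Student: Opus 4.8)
The plan is to prove this classical Hoeffding lemma by analyzing the cumulant generating function $\psi(s):=\ln\mathbb{E}[\exp(sX)]$ and showing it admits a uniform quadratic upper bound. The engine is a second-order Taylor expansion of $\psi$ around $s=0$, combined with the observation that $\psi''$ is the variance of $X$ under an exponentially tilted measure, which is controlled because $X$ is confined to $[a,b]$.

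First I would record the low-order behavior of $\psi$. Since $X$ is bounded, $e^{sX}$ together with all its $s$-derivatives admits an integrable envelope, so differentiation under the expectation is justified and $\psi$ is smooth on $\mathbb{R}$. This immediately gives $\psi(0)=0$ and $\psi'(0)=\mathbb{E}[X]$. Differentiating once more yields
\[
\psi''(s)=\frac{\mathbb{E}[X^2 e^{sX}]}{\mathbb{E}[e^{sX}]}-\left(\frac{\mathbb{E}[X e^{sX}]}{\mathbb{E}[e^{sX}]}\right)^2 .
\]
The crucial reinterpretation is that the right-hand side is exactly the variance of $X$ under the tilted probability measure $\mathbb{P}_s$ whose density with respect to the law of $X$ is proportional to $e^{sX}$; in particular $\psi''(s)\geq 0$ for every $s$.

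Next I would bound this variance uniformly in $s$. Under $\mathbb{P}_s$ the variable $X$ still takes values in $[a,b]$, and for any random variable $Z$ supported on $[a,b]$ one has $\mathrm{Var}(Z)\leq\mathbb{E}[(Z-\tfrac{a+b}{2})^2]\leq(\tfrac{b-a}{2})^2$, since recentering at the interval midpoint can only increase the second moment while the deviation from the midpoint never exceeds $\tfrac{b-a}{2}$. Applying this to $X$ under $\mathbb{P}_s$ gives $\psi''(s)\leq (b-a)^2/4$ for all $s$.

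Finally I would close with Taylor's theorem with Lagrange remainder: for some $\xi$ between $0$ and $s$,
\[
\psi(s)=\psi(0)+s\,\psi'(0)+\tfrac{s^2}{2}\psi''(\xi)\leq s\,\mathbb{E}[X]+\tfrac{s^2}{2}\cdot\tfrac{(b-a)^2}{4}=s\,\mathbb{E}[X]+\tfrac{s^2(b-a)^2}{8},
\]
which is precisely the claimed inequality. The main obstacle is the middle step: correctly identifying $\psi''(s)$ as a genuine variance under the tilted law (so that it is nonnegative and amenable to the bounded-support bound) and then invoking the sharp $(b-a)^2/4$ estimate for the variance of a variable confined to an interval. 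The remaining ingredients — justifying differentiation under the expectation and the one-line Taylor estimate — are routine.
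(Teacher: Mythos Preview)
Your argument is correct and is exactly the standard proof of Hoeffding's lemma: identify $\psi''(s)$ as the variance of $X$ under the exponential tilt, bound that variance by $(b-a)^2/4$ using the interval constraint, and integrate via Taylor with Lagrange remainder. The paper itself does not supply a proof of this lemma; it merely states it (together with the companion Lemma on ratios of exponential sums) and attributes both to \citet{cesa2006prediction}, so there is nothing to compare against beyond noting that your proof is the classical one appearing in that reference.
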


\begin{lemma}\label{thm:lma2}
    For all $J\geq2$, for all $\beta_2\geq\beta_1\geq0$, and for all $d_j\geq0, j\in[J]$ such that $\sum_{j\in[J]}\exp(-\beta_1d_j)\geq1$, 
    $$\ln\frac{\sum_{j\in[J]}\exp(-\beta_1d_j)}{\sum_{j\in[J]}\exp(-\beta_2d_j)}\leq\frac{\beta_2-\beta_1}{\beta_1}\ln J.$$
\end{lemma}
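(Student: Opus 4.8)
The plan is to reduce the claimed inequality to a bound on a single weighted average of the $d_j$ via Jensen's inequality, and then to control that average by an entropy argument that crucially exploits the hypothesis $\sum_{j}\exp(-\beta_1 d_j)\geq 1$. Throughout I write $W(\beta):=\sum_{j\in[J]}\exp(-\beta d_j)$ and assume $\beta_1>0$ (if $\beta_1=0$ the right-hand side is $+\infty$ and there is nothing to prove). I introduce the Gibbs weights $p_j:=\exp(-\beta_1 d_j)/W(\beta_1)$, which form a probability vector on $[J]$.

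First I would rewrite the ratio of partition functions as a weighted average. With $\gamma:=\beta_2-\beta_1\geq 0$,
\[\frac{W(\beta_2)}{W(\beta_1)}=\sum_{j\in[J]}p_j\exp(-\gamma d_j).\]
Since $x\mapsto\exp(-\gamma x)$ is convex, Jensen's inequality applied with the weights $p_j$ gives
\[\sum_{j\in[J]}p_j\exp(-\gamma d_j)\geq\exp\Bigl(-\gamma\sum_{j\in[J]}p_j d_j\Bigr),\]
and taking logarithms yields
\[\ln\frac{W(\beta_1)}{W(\beta_2)}\leq\gamma\sum_{j\in[J]}p_j d_j=(\beta_2-\beta_1)\sum_{j\in[J]}p_j d_j.\]
It therefore suffices to establish $\sum_{j\in[J]}p_j d_j\leq(\ln J)/\beta_1$, equivalently $\beta_1\sum_j p_j d_j\leq\ln J$.

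The key step is to recognize this last quantity as a Shannon entropy. Put $v_j:=\exp(-\beta_1 d_j)\in(0,1]$ and $S:=\sum_j v_j=W(\beta_1)$, and use $\beta_1 d_j=-\ln v_j$ to get $(\beta_1 d_j)v_j=-v_j\ln v_j$, so that $\beta_1\sum_j p_j d_j=\bigl(-\sum_j v_j\ln v_j\bigr)/S$. Normalizing with $q_j:=v_j/S$ (a probability vector on $[J]$) and expanding $-\sum_j v_j\ln v_j=-S\sum_j q_j\ln(Sq_j)=S\,H(q)-S\ln S$, where $H(q):=-\sum_j q_j\ln q_j$, I obtain $\beta_1\sum_j p_j d_j=H(q)-\ln S$. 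Now $H(q)\leq\ln J$ since $q$ is supported on $J$ atoms (maximum-entropy bound), and $\ln S\geq 0$ exactly because the hypothesis gives $S=\sum_j\exp(-\beta_1 d_j)\geq 1$; combining these yields $\beta_1\sum_j p_j d_j\leq\ln J$, completing the reduction.

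The one genuinely load-bearing point — and the place to be careful — is the role of the hypothesis $\sum_j\exp(-\beta_1 d_j)\geq 1$: it is precisely what makes $\ln S\geq 0$, and without it the entropy bound $H(q)\leq\ln J$ would not translate into the desired estimate (the statement indeed fails when $S<1$). The change of variables $v_j=\exp(-\beta_1 d_j)$, which converts the Gibbs-weighted average $\beta_1\sum_j p_j d_j$ into the difference $H(q)-\ln S$, is the crux; the Jensen step and the maximum-entropy inequality $H(q)\leq\ln J$ are both standard. I would keep the degenerate case $\beta_1=0$ separate, noting the inequality holds trivially there.
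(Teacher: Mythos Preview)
The paper does not supply its own proof of this lemma; it is stated without proof and attributed (together with the companion Hoeffding-type lemma) to Cesa-Bianchi and Lugosi (2006). Your argument is correct: the Jensen step reduces the claim to $\beta_1\sum_j p_j d_j\leq\ln J$, and your entropy identity $\beta_1\sum_j p_j d_j=H(p)-\ln W(\beta_1)$ combined with the maximum-entropy bound $H(p)\leq\ln J$ and the hypothesis $W(\beta_1)\geq1$ finishes it cleanly.

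For comparison, the standard textbook proof is a one-line power-mean argument: with $r=\beta_2/\beta_1\geq1$ and $a_j=\exp(-\beta_1 d_j)$, Jensen applied to the convex map $x\mapsto x^r$ on the uniform weights gives $W(\beta_2)=\sum_j a_j^r\geq J^{1-r}W(\beta_1)^r$, whence $W(\beta_1)/W(\beta_2)\leq J^{r-1}W(\beta_1)^{1-r}\leq J^{r-1}$ using $W(\beta_1)\geq1$ and $1-r\leq0$. Your route is slightly longer but makes the structure transparent by isolating the Gibbs average as an entropy minus a log-partition term; the power-mean proof is more compact but fuses these ingredients. Both invoke the hypothesis $\sum_j\exp(-\beta_1 d_j)\geq1$ at exactly the same place, to discard a nonpositive term, and your handling of the degenerate case $\beta_1=0$ is appropriate.
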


\begin{proof}[Proof to \cref{thm:checkReg_expert}]
    For the notation simplicity, let $L^{t}_{j, k}= \sum_{t'=1}^t\Delta_{t',k}\rho_\alpha(s^{t'-1}(\bm X_{t'}, k), \tau^{t'-1})$ be the accumulative weighted check loss (up to time $t$) with $j$-th expert for class $k$, and $j^{t}_k\in\argmin_{j\in[J]}L^t_{j, k}$ denote an expert with the smallest accumulative loss up to time $t$ for class $k$. 
    After defining the weights
    $$\omega^t_{j,k}=\exp(-\frac{1}{\sqrt{t+1}}L^t_{j,k}),~~ {\omega^\prime}^t_{j,k}=\exp(-\frac{1}{\sqrt{t}}L^t_{j,k}),~~\mbox{and}~~ \bar{\omega}^t_{j,k}=\omega^t_{j,k}/\sum_{i\in[J]}\omega^t_{i,k},
    $$ we have the below equation
\begin{align}
    \sqrt{t}\ln{\bar{\omega}^{t-1}_{j^{t-1}_k,k}}-\sqrt{t+1}\ln{\bar{\omega}^{t}_{j^{t}_k,k}}&=\underbrace{(\sqrt{t+1}-\sqrt{t})\ln\frac{1}{\bar{\omega}^{t}_{j^{t}_k,k}}}_{\circled{\tiny{1}}}+
    \underbrace{\sqrt{t}\ln\frac{\bar{\omega}^{\prime ^ t}_{j^{t}_k,k}}{\bar{\omega}^{t}_{j^{t}_k,k}}}_{\circled{\tiny{2}}}+
    \underbrace{\sqrt{t}\ln\frac{\bar{\omega}^{t-1}_{j^{t-1}_k,k}}{\bar{\omega}^{\prime ^ t}_{j^{t}_k,k}}}_{\circled{\tiny{3}}},\label{eq:diff}
\end{align}
where 
\begin{align}
    \circled{\tiny{1}}\leq (\sqrt{t+1}-\sqrt{t})\ln J \label{eq:term1}
\end{align} 
since $j^t_k\in\argmin_{j\in[J]}L^t_{j, k}$ and hence $\bar{\omega}^{ t}_{j^t_k,k}\geq\frac{1}{J}$,

\begin{align}
     \circled{\tiny{2}}=\sqrt{t}\ln\frac{\sum_{j\in[J]}\exp[-\frac{1}{\sqrt{t+1}}(L^t_{j,k}-L^t_{j^{t}_k,k})]}{\sum_{j\in[J]}\exp[-\frac{1}{\sqrt{t}}(L^t_{j,k}-L^t_{j^{t}_k,k})]}&\leq \sqrt{t}\frac{\frac{1}{\sqrt{t}}-\frac{1}{\sqrt{t+1}}}{\frac{1}{\sqrt{t+1}}}\ln J~~~~\mbox{(due to \cref{thm:lma2})}\notag\\
     &=(\sqrt{t+1}-\sqrt{t})\ln J,\label{eq:term2}
\end{align}
and 
\begin{align}
     \circled{\tiny{3}}&=\sqrt{t}\ln\frac{\omega^{t-1}_{j^{t-1}_k,k}}{\omega^{\prime ^ t}_{j^{t}_k,k}}
    +\sqrt{t}\ln\frac{\sum_{j\in[J]}\omega^{\prime ^ t}_{j,k}}{\sum_{j\in[J]}\omega^{t-1}_{j,k}}\notag\\
    &=\sqrt{t}\ln\frac{\exp(-\frac{1}{\sqrt{t}}L^{t-1}_{j^{t-1}_k,k})}{\exp(-\frac{1}{\sqrt{t}}L^t_{j^{t}_k,k})}+\sqrt{t}\ln\frac{\sum_{j\in[J]}\omega^{\prime ^ t}_{j,k}}{\sum_{j\in[J]}\omega^{t-1}_{j,k}}=L^t_{j^{t}_k,k}-L^{t-1}_{j^{t-1}_k,k}+\underbrace{\sqrt{t}\ln\frac{\sum_{j\in[J]}\omega^{\prime ^ t}_{j,k}}{\sum_{j\in[J]}\omega^{t-1}_{j,k}}}_{\circled{\tiny{4}}}.\label{eq:term3}
\end{align}
Additionally, 
\begin{align}
    \circled{\tiny{4}}&=\sqrt{t}\ln\frac{\sum_{j\in[J]}\exp[-\frac{1}{\sqrt{t}}(L^{t-1}_{j,k}+\Delta_{t,k}\rho_{\alpha}(s^{t-1}(\bm X_t, k), \tau^{t-1}_{j,k}))]}{\sum_{j\in[J]}\exp(-\frac{1}{\sqrt{t}}L^{t-1}_{j,k})}\notag\\
    &=\sqrt{t}\ln\frac{\sum_{j\in[J]}\omega^{t-1}_{j,k}\cdot \exp(-\frac{1}{\sqrt{t}}\Delta_{t,k}\rho_{\alpha}(s^{t-1}(\bm X_t, k), \tau^{t-1}_{j,k}))}{\sum_{j\in[J]}\omega^{t-1}_{j,k} }\notag\\
    &=\sqrt{t}\ln\sum_{j\in[J]}\bar{\omega}^{t-1}_{j,k}\cdot \exp(-\frac{1}{\sqrt{t}}\Delta_{t,k}\rho_{\alpha}(s^{t-1}(\bm X_t, k), \tau^{t-1}_{j,k}))\notag\\
    &\leq\sqrt{t}\biggl[-\frac{1}{\sqrt{t}}\sum_{j\in[J]}\bar{\omega}^{t-1}_{j,k}\Delta_{t,k}\rho_{\alpha}(s^{t-1}(\bm X_t, k), \tau^{t-1}_{j,k}))+\frac{1}{8c_k^2t}\biggr]~~~~\mbox{(due to \cref{thm:lma1})}\notag\\
    &\leq -\Delta_{t,k}\rho_{\alpha}(s^{t-1}(\bm X_t, k), \sum_{j\in[J]}\bar{\omega}^{t-1}_{j,k} \tau^{t-1}_{j,k}))+\frac{1}{8c_k^2\sqrt{t}}\notag\\
    &=-\Delta_{t,k}\rho_{\alpha}(s^{t-1}(\bm X_t, k),  \bar\tau^{t-1}_{k}))+\frac{1}{8c_k^2\sqrt{t}}\label{eq:term4}.
\end{align}

Thus, by combing \labelcref{eq:diff,eq:term1,eq:term2,eq:term3,eq:term4}, we have 
\begin{align}
    &~~~~\Delta_{t,k}\rho_{\alpha}(s^{t-1}(\bm X_t, k), \bar\tau^{t-1}_{k})))-(L^t_{j^{t}_k,k}-L^{t-1}_{j^{t-1}_k,k})\notag\\
    &\leq \sqrt{t+1}\ln{\bar{\omega}^{t}_{j^{t}_k,k}}-\sqrt{t}\ln{\bar{\omega}^{t-1}_{j^{t-1}_k,k}}+\frac{1}{8c_k^2\sqrt{t}}+2(\sqrt{t+1}-\sqrt{t})\ln J\label{eq:simpDiff}
\end{align}
By taking the sum over $t\in[T]$ for both sides of \labelcref{eq:simpDiff}, we have
\begin{align}
    \sum_{t=1}^T \Delta_{t,k}\rho_{\alpha}(s^{t-1}(\bm X_t, k), \bar\tau^{t-1}_{k}))-L^T_{j^{T}_k, k}&\leq \ln J+\frac{1}{8c_k^2}\sum_{t=1}^T\frac{1}{\sqrt{t}}+2(\sqrt{T+1}-1)\ln J\notag\\ 
    \Longrightarrow \sum_{t=1}^T \Delta_{t,k}\rho_{\alpha}(s^{t-1}(\bm X_t, k), \bar\tau^{t-1}_{k}))-&\min_{j\in[J]}\sum_{t=1}^T\Delta_{t,k}\cdot\rho_\alpha(s(\bm X_{t}, k), \tau^{t-1}_{j,k}) \leq \frac{1}{4c_k^2}\sqrt{T}+2\sqrt{T}\ln J\notag
\end{align}
\end{proof}

\end{document}